\documentclass[11pt]{article}
\usepackage[utf8]{inputenc}
\usepackage{amsmath, amsthm, amssymb}
\usepackage{graphicx}
\usepackage{subcaption}
\usepackage{booktabs}
\usepackage{siunitx}
\usepackage[margin=1.1in]{geometry}
\usepackage{enumitem}
\usepackage{tcolorbox}
\usepackage{siunitx}   

\usepackage{hyperref}
\usepackage{newunicodechar}
\newunicodechar{ }{\,}
\sisetup{per-mode=symbol}
\DeclareSIUnit\year{yr}
\title{Can adversarial attacks by large language models be attributed?}
\author{%
  \begin{tabular}{@{}c c c@{}}
    Manuel Cebrian\textsuperscript{1} &
    Andres Abeliuk\textsuperscript{2} &
    Jan Arne Telle\textsuperscript{3}
  \end{tabular}\\[4pt]
  {\small 
    \textsuperscript{1}Center for Automation and Robotics, Spanish National Research Council, Spain}\\
    {\small 
    \textsuperscript{2}Department of Computer Science, University of Chile, Chile}\\
     {\small 
    \textsuperscript{3}Department of Informatics, University of Bergen, Norway}
}

\date{}
\newtheorem{theorem}{Theorem}

\newtheorem{proposition}[theorem]{Proposition}
\newtheorem{definition}[theorem]{Definition}
\newtheorem{corollary}[theorem]{Corollary}
\newtheorem{observation}{Observation}

\begin{document}
\maketitle

\begin{abstract}
Attributing outputs from Large Language Models (LLMs) in adversarial settings---such as cyberattacks and disinformation campaigns---presents significant challenges that are likely to grow in importance. We approach this attribution problem from both a theoretical and empirical perspective, drawing on formal language theory (identification in the limit) and data-driven analysis of the expanding LLM ecosystem. By modeling an LLM's set of possible outputs as a formal language, we analyze whether finite samples of text can uniquely pinpoint the originating model. Our results show that under mild assumptions of overlapping capabilities among models, certain classes of LLMs are fundamentally \emph{non-identifiable} from their outputs alone. We delineate four regimes of theoretical identifiability: (1) an infinite class of deterministic (discrete) LLM languages is not identifiable (Gold's classical result from 1967); (2) an infinite class of probabilistic LLMs is also not identifiable (by extension of the deterministic case); (3) a finite class of deterministic LLMs is identifiable (consistent with Angluin’s tell-tale criterion); and (4) even a finite class of probabilistic LLMs can be non-identifiable (we provide a new counterexample establishing this negative result). Complementing these theoretical insights, we quantify the explosion in the number of plausible model origins (\emph{hypothesis space}) for a given output in recent years. Even under conservative assumptions (each open-source model fine-tuned on at most one new dataset), the count of distinct candidate models doubles approximately every 0.5~years, and allowing multi-dataset fine-tuning combinations yields doubling times as short as 0.28~years. This combinatorial growth, alongside the extraordinary computational cost of brute-force likelihood attribution across all models and potential users renders exhaustive attribution infeasible in practice. Our findings highlight an urgent need for new strategies and proactive governance to mitigate risks posed by un-attributable, adversarial use of LLMs as their influence continues to expand.
\end{abstract}

\begin{tcolorbox}[colback=gray!10!white, colframe=black, boxrule=0.5mm, arc=3mm, title=\textbf{Significance}]
When AI-generated attacks---from disinformation to cyberattacks---occur, can we reliably trace them back to their originating language model? This paper establishes theoretical limits, showing that in realistic settings, attributing outputs to specific large-language models is provably impossible, even with unlimited data. Empirically, we quantify the explosive growth in the number of plausible model origins, demonstrating how quickly attribution becomes infeasible in practice. These combined results have stark implications for cybersecurity, misinformation mitigation, and AI governance.
\end{tcolorbox}

\section{Introduction}
The challenge of attributing outputs from LLMs in the context of adversarial attacks or disinformation campaigns is emerging as a concern for both cybersecurity and information integrity~\cite{axelrod2014timing,edwards2017strategic,urbina2022dual,perlroth2021they,xu2024autoattacker}. In such settings, \emph{attribution} refers to identifying the specific model responsible for generating harmful or misleading content. This step is essential not only for conducting investigations and determining whether the implicated model should be restricted or decommissioned, but also for mitigating future risks and ensuring accountability in the deployment of LLM-based agents~\cite{rahwan2019machine,shavit2023practices,anwar2024foundational}. Unfortunately, reliably linking a piece of content to a particular LLM has proven extremely challenging in practice. 

The demand for robust attribution is underscored by new AI-governance initiatives. The EU AI Act and U.S. Executive Order 14110 both mandate model-level transparency and risk-mitigation tools, such as watermarking and incident reporting, that implicitly assume one can identify which model produced a given output \cite{EUAIAct2024,EO14110}. Our work asks whether that assumption is even feasible in principle. Ultimately, accountability lies with human actors, but pinpointing the source model is a crucial intermediate step—it enables enforcement of regulations and can lead investigators to the responsible parties through the chain-of-custody of AI tools.

Interestingly, the attribution task can be framed in terms of formal language theory, specifically the problem of \emph{language identification in the limit}. This theoretical framework, introduced by Gold~\cite{gold1967language} and extended by Angluin~\cite{angluin1980inductive}, has been widely studied in theoretical computer science and cognitive science~\cite{johnson2004gold}. In our context, we can represent the set of all possible outputs of a given LLM as a formal language (a set of strings over some finite alphabet). Attribution then asks whether a finite sample of observed outputs can uniquely determine which language (and hence which LLM) produced them.

Framing LLM outputs as formal languages provides a structured way to explore the feasibility of attribution. Given an observed set of outputs $S$ (e.g., a collection of generated texts), we are essentially asking if there exists a unique LLM $M$ in some model class $\mathcal{M}$ whose language $L(M)$ includes $S$. If two different models $M_i$ and $M_j$ can both generate all strings in $S$ (i.e.~$S \subseteq L(M_i) \cap L(M_j)$), then $S$ alone cannot distinguish between $M_i$ and $M_j$. In practice, fine-tuned models often exhibit substantial overlap in their output spaces, especially if they share training data or base architectures. This overlap means that, under mild assumptions, multiple models may produce the same set of outputs, foreshadowing fundamental limits on attribution certainty. 

 Our theoretical results are derived under worst-case assumptions using Gold’s identification-in-the-limit framework. While this infinite-data scenario may seem idealized, it is in fact the most favorable setting for defenders: if attribution fails even with unlimited, adversarially selected samples, it will surely fail in practical, data-scarce conditions. This framing allows us to prove robust impossibility results that remain valid even when scaled down to real-world constraints. Analyzing worst-case scenarios is a common approach in cybersecurity and cryptography – it sets fundamental limits that remain valid even when conditions are less adversarial. Our use of an adversarial, infinite-sequence model is in this spirit: it defines the boundary of what is theoretically possible for attribution under the most challenging conditions.

In the remainder of this paper, we investigate the theoretical limits of LLM identification under four regimes and then examine empirical trends that exacerbate the attribution problem. Below we summarize these regimes and our main findings for each. 

\begin{itemize}[leftmargin=2em]
    \item \textbf{Infinite discrete model class: Negative.} If the space of possible LLMs (or languages) is infinite and models produce outputs in a deterministic (discrete) manner, then identification in the limit is not possible. This was proved in Gold’s classic result and formalized by Angluin’s criteria: intuitively, any infinite class of languages that is sufficiently complex (e.g., containing an infinite language with arbitrarily many finite variants) is \emph{not} identifiable from positive data.
    \item \textbf{Infinite probabilistic model class: Negative.} Allowing models to be probabilistic (assigning probabilities to strings) does not improve identifiability when the class is infinite. In fact, the deterministic case is a special case of the probabilistic case (a formal language can be viewed as a probabilistic language with 0/1 probabilities), so an infinite collection of probabilistic LLMs remains non-identifiable in the limit.
    \item \textbf{Finite discrete model class: Positive.} If the number of candidate LLMs is finite and their outputs are deterministic languages, then identification in the limit becomes possible. In this scenario, because there are only finitely many possible languages, one can eventually find a finite subset of outputs (a \emph{tell-tale set} in Angluin’s terminology) for each language. Thus, with enough data, a learning algorithm can converge to the correct model.
   \item \textbf{Finite probabilistic model class: Negative.}  Somewhat counter-intuitively, even a finite set of probabilistic LLMs can defy identification.  We give the first explicit counterexample—thereby resolving a question left open since Gold’s 1967 work—showing that if two probabilistic languages share identical support but differ in their probability distributions, no amount of data (under the standard identification-in-the-limit setting) can reliably distinguish them.
\end{itemize}

Following our theoretical analysis, we present a data-driven study of the current LLM landscape. The number of publicly known LLMs and fine-tuned variants has exploded in recent years~\cite{bommasani2023ecosystem-graphs}, which greatly enlarges the hypothesis space for attribution. We introduce a combinatorial lower bound $N(t)$ on the number of distinguishable model origins at time~$t$ and find that $N(t)$ has been growing exponentially, with a troublingly short doubling time (well under one year in recent data). We break down this growth by model modality and by developer region, revealing that multimodal models and contributions from Asia are among the fastest-growing segments. This rapid proliferation means that any brute-force or exhaustive attribution strategy (e.g. comparing an output against every possible model) will become increasingly infeasible.

We also examine the computational hurdles to attribution. Even under optimistic assumptions, performing likelihood-based attribution across all models for a single piece of content could require an astronomical number of operations, pushing the limits of modern supercomputers. We illustrate this with a scenario using the current cumulative parameter count of known models and show that attributing a moderately long text (100k tokens) against all models would take on the order of minutes on the world’s fastest supercomputer. Scaling such analysis to nation-wide LLM usage (on the order of $10^{15}$ tokens/year for the USA) would require infrastructure and time on the order of many days of high-performance computing for just a single attribution query, as summarized in our estimates. Moreover, real-world factors like network propagation of content can further obscure attribution, as malicious actors can route outputs through layers of social networks to mask their origin~\cite{newman2006structure,christakis2009connected}. Paradoxically, recent theoretical work has shown that \emph{language generation in the limit} is achievable without identification~\cite{kleinberg2024language}, meaning an agent can eventually mimic a target language’s outputs without actually knowing which language it is---pointing to a potential arms race where attackers and defenders can reproduce content indefinitely without exposing the true source~\cite{kleinberg2024language}.

In summary, our contributions are: (i) a rigorous theoretical exposition of why LLM attribution is impossible in three of four fundamental regimes (with a positive result only in the trivial finite-deterministic case), including a new theorem for the probabilistic finite case; (ii) a quantitative analysis of the LLM model landscape growth, demonstrating an unsustainable explosion in the attribution search space; and (iii) a discussion of practical challenges and implications, highlighting the need for new methodologies (e.g. model fingerprinting, heuristic narrowing of candidates, regulation) to address the forensic blind spot created by increasingly ubiquitous LLMs. 

Although practitioners have long suspected attribution is hard, there were no formal guarantees quantifying how hard—or under which conditions it is impossible. Our results close this gap by establishing the first rigorous limits on LLM attribution.

\section{Infinite Classes of Discrete LLMs: Impossibility of Identification}

We first consider the classical scenario of Gold~\cite{gold1967language}: an infinitely large stream of outputs from an LLM is observed (so every string the model can produce will eventually appear in the sample), and the class of potential models is infinite. In this section, we assume each model $M$ produces a \emph{discrete language} $L(M)$---a set of strings (the model's outputs) with no probabilistic information attached. Identification in this setting means that a learning algorithm, given enough data, will eventually infer the correct language $L(M)$ (and thus the correct model).

Gold formalized \emph{identification in the limit} from positive examples as follows:

\begin{definition}[Gold’s Identification in the Limit~\cite{gold1967language}]
A class of languages $\mathcal{L}$ is said to be \emph{identifiable in the limit} if there exists a learning algorithm $\mathcal{A}$ such that, for any target language $L^* \in \mathcal{L}$, given an infinite sequence of examples $\langle s_1, s_2, \dots \rangle$ with each $s_i \in L^*$ and each string in $L^*$ appearing at least once in the sequence, the algorithm $\mathcal{A}$ produces a sequence of hypotheses $\langle H_1, H_2, \dots \rangle$ (where each $H_n$ is a language in $\mathcal{L}$) that satisfies:
\begin{enumerate}[label=(\arabic*)]
    \item For all but finitely many $n$, $H_n = L^*$.
    \item Each hypothesis $H_n$ is consistent with the data observed up to time $n$, i.e. $\{s_1, s_2, \dots, s_n\} \subseteq H_n$.
\end{enumerate}
\label{def:gold_original}
\end{definition}

In simpler terms, identifiability in the limit means that as the algorithm sees more and more outputs (eventually seeing every output that the target model can produce), it converges to correctly guessing the target language and does not later change its mind. Gold showed that certain classes of languages are \emph{not} identifiable in the limit from positive data. In particular, any class of languages that is sufficiently rich---for example, containing all finite languages and at least one infinite language---cannot be learned with this criterion. 

Formally framing the attribution task in Gold’s identification-in-the-limit paradigm (as we do here) provides a unified lens to analyze attribution, connecting earlier empirical approaches (e.g. watermarking, stylometry) to a common theoretical foundation.

Angluin later provided a characterization of identifiable classes with her concept of \emph{tell-tale sets}~\cite{angluin1980inductive}. We recall Angluin’s theorem here, as it will be useful for framing our results:

\begin{theorem}[Angluin’s Theorem~\cite{angluin1980inductive}]
An indexed family of recursive languages $\{L_i\}_{i \in \mathbb{N}}$ is identifiable in the limit from positive data if and only if there exists a recursively enumerable set of finite subsets $\{T_i\}_{i \in \mathbb{N}}$ (with $T_i \subseteq L_i$ for each $i$) such that for all $i \neq j$, $T_i \not\subseteq L_j$. In other words:
\begin{enumerate}[label=(\roman*)]
    \item For each $i$, $T_i$ is a finite subset of $L_i$.
    \item For each pair $i \neq j$, if $T_i \subseteq L_j$ then $L_j$ is not a proper subset of $L_i$.
\end{enumerate}
The sets $T_i$ are sometimes called \emph{tell-tale sets} for the language $L_i$.
\label{thm:angluin}
\end{theorem}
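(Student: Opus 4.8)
The statement is an \emph{if and only if}, so the plan is to prove the two directions separately, treating condition~(ii) as the operative tell-tale criterion (it, rather than the looser informal phrasing, is what must be checked). For \emph{sufficiency} I would manufacture a learner out of the tell-tale family; for \emph{necessity} I would extract tell-tales from a hypothetically successful learner. The guiding intuition throughout is that a tell-tale set $T_i$ acts as a finite \emph{certificate} preventing data drawn from $L_i$ from being mistaken for data drawn from a proper sublanguage of $L_i$ --- which is exactly the overgeneralization failure mode that defeats learning from positive data.

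For sufficiency, assume a uniformly recursively enumerable family $\{T_i\}$ satisfying (i)--(ii), and I would build $\mathcal{A}$ as an enumeration-and-test learner. At stage $n$, with data $D_n=\{s_1,\dots,s_n\}$, dovetail the tell-tale enumerations for $n$ steps to obtain finite approximations $T_i^{\,n}$, and output the least $i\le n$ with $T_i^{\,n}\subseteq D_n$ and $D_n\subseteq L_i$ (both decidable since the $L_i$ are recursive); if no such $i$ exists, fall back to the least index consistent with $D_n$, which exists and is located by a terminating search because the target's own index is consistent, thereby preserving clause~(2) of Definition~\ref{def:gold_original}. For convergence let $L^\star$ be the target and $j$ the least index with $L_j=L^\star$. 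I would show every wrong index $i<j$ is eventually abandoned via two cases: if $L^\star\not\subseteq L_i$, then a string outside $L_i$ appears in the data and the consistency test bars $i$ forever; if instead $L^\star\subsetneq L_i$ (the only remaining option, since $L_i\neq L^\star$), then (ii) forces $T_i\not\subseteq L^\star$, so some element of $T_i$ never appears, and once it is enumerated into $T_i^{\,n}$ the tell-tale test bars $i$. Meanwhile the finite set $T_j\subseteq L^\star$ is eventually fully enumerated and contained in $D_n$, so from some point on $j$ passes both tests and is the least index to do so; hence $\mathcal{A}$ converges to $j$.

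For necessity, assume $\mathcal{A}$ identifies the family, and I would invoke the Blum--Blum \emph{locking sequence} idea: for each $i$ there is a finite sequence $\sigma_i$ of elements of $L_i$ such that $\mathcal{A}(\sigma_i)$ names $L_i$ and $\mathcal{A}$ never changes its guess when $\sigma_i$ is extended by further elements of $L_i$. Existence follows by contradiction --- absent any locking sequence one could splice together extensions on which $\mathcal{A}$ either diverges or converges to a wrong index, contradicting identification of $L_i$. I would then set $T_i=\mathrm{content}(\sigma_i)$, a finite subset of $L_i$, giving (i). For (ii), suppose $T_i\subseteq L_j$ with $L_j\subsetneq L_i$, and feed $\mathcal{A}$ a presentation of $L_j$ beginning with $\sigma_i$ (legitimate since $\mathrm{content}(\sigma_i)\subseteq L_j$) and continuing through all of $L_j\subseteq L_i$; by the locking property $\mathcal{A}$ stays fixed at its name for $L_i\neq L_j$, so it fails to identify $L_j$ --- a contradiction.

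The main obstacle is the \emph{uniform recursive enumerability} of $\{T_i\}$ demanded by the theorem, not mere existence: the locking condition quantifies over \emph{all} extensions within $L_i$ and hence is not directly decidable, so I cannot simply search for $\sigma_i$ and halt. The remedy I would pursue is an effective dovetailed procedure that, using the recursive enumeration of $L_i$, feeds $\mathcal{A}$ increasingly long prefixes of a canonical presentation of $L_i$ and tentatively emits into $T_i$ the strings seen up to each apparent stabilization of $\mathcal{A}$; identifiability guarantees convergence after finitely many mind-changes, so this r.e.\ process commits to a finite set, while the locking argument certifies that whatever finite set it commits to satisfies~(ii). Reconciling the non-decidability of the locking condition with the required effectivity is the delicate heart of Angluin's proof and the step I expect to demand the most care.
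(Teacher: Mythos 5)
The paper does not actually prove this statement---it is recalled verbatim (as Theorem~\ref{thm:angluin}) from Angluin's 1980 paper and used as a black box---so there is no in-paper proof to compare against. Your two-direction architecture is the classical one: an enumerate-and-test learner for sufficiency and locking sequences for necessity. You were also right to treat condition~(ii) as the operative criterion; the headline phrasing ``for all $i\neq j$, $T_i\not\subseteq L_j$'' in the paper's statement is too strong (it fails whenever $L_i\subsetneq L_j$) and is not what Angluin proves. Your sufficiency argument is complete and correct: the case split on $L^\star\not\subseteq L_i$ versus $L^\star\subsetneq L_i$, with (ii) supplying a witness in $T_i\setminus L^\star$ in the second case, is exactly the right convergence argument, and the consistency fallback is handled properly.

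The one genuine gap is in the effectivization of the necessity direction, precisely where you said the delicacy lies---but your proposed remedy is the wrong one. Watching $\mathcal{A}$ on a single canonical presentation of $L_i$ and emitting content at each ``apparent stabilization'' does not produce a locking sequence: $\mathcal{A}$ may stabilize on the canonical text yet still change its mind on some other extension with content in $L_i$, and in that case the emitted set is the content of a prefix that is \emph{not} locking, so your own argument for~(ii) (which uses the locking property to keep $\mathcal{A}$ pinned on an arbitrary continuation through $L_j$) no longer applies to it. The standard fix is an active search rather than passive observation: maintain a candidate $\sigma_k$, dovetail a search over \emph{all} finite extensions $\tau$ with content in $L_i$ for one satisfying $\mathcal{A}(\sigma_k\tau)\neq\mathcal{A}(\sigma_k)$, and on success set $\sigma_{k+1}=\sigma_k\tau w_{k+1}$ (appending the next canonical element of $L_i$), emitting the new content each time. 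If this process advanced forever it would build a text for $L_i$ on which $\mathcal{A}$ changes its mind infinitely often, contradicting identification; hence it stalls at some $\sigma_{k_0}$ whose failed search certifies it is a genuine locking sequence, and the r.e.\ output is exactly its (finite) content. With that substitution your proof is the standard one and goes through.
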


Intuitively, Angluin’s theorem says that a class of languages is learnable from positive data if and only if each language $L_i$ in the class has some finite “evidence”, the tell-tale subset $T_i$. The point  is that once the strings of $T_i$ have appeared among the
sample strings, we need not fear ``overgeneralization" in guessing $L_i$. This is
because the true answer, even if it is not $L_i$, cannot be a proper subset of $L_i$, and so if the true answer is not $L_i$ we will eventually see a conflict between the data and $L_i$, which will
force us to change our guess.
On the other hand, if no finite tell-tale sets exist then the class cannot be learned.

Using this criterion, we can formalize Gold’s negative result as a corollary. Specifically, if a class of languages contains an infinite language that has infinitely many finite subsets extendable to different languages in the class, then no finite tell-tale set can exist for that infinite language. This leads to non-identifiability:

\begin{corollary}[Non-Identifiability Due to Infinite Languages]
\label{cor:non_identifiability}
Let $\mathcal{L}$ be a collection of languages such that:
\begin{enumerate}[label=(\roman*)]
    \item $\mathcal{L}$ contains at least one infinite language $L_{\infty}$.
    \item For every finite subset $S \subset L_{\infty}$, there exists some language $L' \in \mathcal{L}$ with $S \subseteq L' \subset L_{\infty}$ (a proper subset).
\end{enumerate}
Then $\mathcal{L}$ is not identifiable in the limit from positive data.
\end{corollary}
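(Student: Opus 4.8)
The plan is to argue by contradiction using the tell-tale characterization of Theorem~\ref{thm:angluin}. First I would assume that $\mathcal{L}$ is identifiable in the limit from positive data, present it as an indexed family $\{L_i\}_{i\in\mathbb{N}}$ of recursive languages, and fix an index $i_0$ with $L_{i_0} = L_\infty$. By the forward direction of Angluin's theorem, identifiability supplies a recursively enumerable family of finite tell-tale sets $\{T_i\}_{i\in\mathbb{N}}$; in particular $L_\infty$ has a finite tell-tale set $T_{i_0} \subseteq L_\infty$ satisfying the tell-tale condition: for every $j \neq i_0$, if $T_{i_0} \subseteq L_j$ then $L_j$ is not a proper subset of $L_{i_0}$.

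Next I would exploit that $L_\infty$ is infinite while $T_{i_0}$ is finite, so that $T_{i_0}$ is a proper finite subset, $T_{i_0} \subset L_\infty$. Applying hypothesis~(ii) of the corollary to this $S = T_{i_0}$ yields a language $L' \in \mathcal{L}$ — say $L' = L_{j_0}$ — with
\[
T_{i_0} \subseteq L_{j_0} \subsetneq L_\infty = L_{i_0}.
\]
Since $L_{j_0}$ is a \emph{proper} subset of $L_{i_0}$ the two languages are distinct, hence $j_0 \neq i_0$. But then $L_{j_0}$ simultaneously contains the tell-tale set $T_{i_0}$ and is a proper subset of $L_{i_0}$, which directly contradicts the tell-tale condition for $L_{i_0}$. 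This contradiction shows that no finite tell-tale set can exist for $L_\infty$, so by Theorem~\ref{thm:angluin} the class $\mathcal{L}$ is not identifiable in the limit from positive data.

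The conceptual core — and the only place demanding care — is the overgeneralization trap that the tell-tale condition is designed to rule out: whatever finite evidence $T_{i_0}$ a learner might wait for before committing to $L_\infty$, hypothesis~(ii) always produces a strictly smaller language $L_{j_0}$ in the class that is still consistent with that evidence. A learner that guesses $L_\infty$ on a presentation of $L_{j_0}$ overshoots and never detects its error, since every string it has seen (and will ever see, as they all lie in $L_{j_0} \subset L_\infty$) stays consistent with $L_\infty$. The obstacles here are bookkeeping rather than mathematical depth: one must (a) fix a concrete indexed presentation of $\mathcal{L}$ so that Theorem~\ref{thm:angluin} applies, and (b) invoke the correct (weaker) tell-tale condition — namely that $T_{i_0} \subseteq L_j$ forbids only $L_j$ from being a proper subset of $L_{i_0}$ — rather than the stronger reading $T_{i_0} \not\subseteq L_j$, since the whole argument turns on exhibiting such a proper subset. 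I would close by observing that Gold's classical class, consisting of all finite languages together with one infinite language such as $\Sigma^*$, satisfies both (i) and (ii), so his 1967 impossibility result falls out as a special case.
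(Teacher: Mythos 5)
Your proof is correct and follows essentially the same route as the paper's: assume identifiability, extract a finite tell-tale set $T_{i_0}$ for $L_\infty$ via Theorem~\ref{thm:angluin}, and apply hypothesis~(ii) with $S=T_{i_0}$ to produce a proper subset $L_{j_0}\subsetneq L_\infty$ containing $T_{i_0}$, contradicting the tell-tale condition. Your added care in distinguishing the weaker tell-tale condition~(ii) from the stronger reading $T_i\not\subseteq L_j$ is a worthwhile clarification, but the argument itself matches the paper's.
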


\begin{proof}
Suppose, for sake of contradiction, that $\mathcal{L}$ is identifiable in the limit. By Theorem~\ref{thm:angluin}, there must exist a finite tell-tale set $T_{L_{\infty}} \subset L_{\infty}$ that distinguishes $L_{\infty}$ from all its proper subsets in $\mathcal{L}$. However, condition~(ii) guarantees that for \emph{every} finite subset $T_{L_{\infty}}$ of $L_{\infty}$, we can find another language $L' \in \mathcal{L}$ such that $T_{L_{\infty}} \subseteq L' \subset L_{\infty}$. This means no finite subset of $L_{\infty}$ can serve as a unique identifier, contradicting the requirement for identification. Therefore, $\mathcal{L}$ is not identifiable in the limit.
\end{proof}

Corollary~\ref{cor:non_identifiability} is essentially a formal restatement of Gold's observation: if an infinite language can be approximated arbitrarily well by other languages in the class (by matching it on every finite sample but eventually diverging), a learner can never be sure which language is the true target. 

We can construct a very simple example of such a class $\mathcal{L}$ to illustrate the concept. Consider an alphabet $\Sigma = \{x\}$ (a single symbol). For each $k \in \mathbb{N}$, define $L_k = \{x^n : 0 < n \le k\}$ as the set of all strings of length at most $k$ (over $\Sigma$). Let $L_{\infty} = \Sigma^*$ be the set of all finite strings over $\Sigma$. Now take $\mathcal{L} = \{L_k : k \in \mathbb{N}\} \cup \{L_{\infty}\}$. In this family, $L_{\infty}$ is infinite, and for any finite sample of strings $S \subset L_{\infty}$, if $m$ is the length of the longest string in $S$, then $S \subseteq L_m \subset L_{\infty}$ with $L_m \in \mathcal{L}$. Thus $L_{\infty}$ has no finite tell-tale set (any finite set of examples from $L_{\infty}$ could have come from some $L_k$ with $k$ large enough). By Corollary~\ref{cor:non_identifiability}, $\mathcal{L}$ is not identifiable. Indeed, a learner seeing strings of increasing length can never be sure whether eventually some maximal length will appear (indicating a finite language $L_k$) or the strings will continue indefinitely (indicating $L_{\infty}$).

Translating back to LLMs, we may fine-tune a base model on \emph{any}
finite set $S$ of strings and obtain a model $M_S$ whose language is
exactly $S$.\footnote{For simplicity we state the argument with
$L(M_S)=S$; relaxing to the weaker—but still sufficient—condition
$L(M_S)\supseteq S$ leaves the combinatorial lower bound unchanged.}
This corresponds to an open-ended model class in which one model
(e.g.\ a large base checkpoint) can generate an unbounded set of
outputs, yet for every \emph{finite} collection of outputs an attacker
might observe there exists another model (a suitably fine-tuned or
restricted version) that produces precisely those outputs and no
others.  Under such conditions no attribution algorithm can reliably
distinguish the unbounded model from the myriad fine-tuned variants.
In an ecosystem where models are continually specialized, this
“infinite ladder” of ever-narrower languages is not merely theoretical
but an expected consequence of routine fine-tuning.

Thus, we conclude that when considering an infinite class of possible LLMs (with languages that can nest in this way), identification in the limit is provably impossible. This negative result sets a theoretical upper bound on what we can hope to achieve with attribution: even under idealized conditions (infinite data, no noise, etc.), there are fundamental ambiguities that cannot be resolved.

\section{Infinite Classes of Probabilistic LLMs}
In practice, LLMs are probabilistic by nature---they produce distributions over outputs. One might wonder if incorporating probability information could help distinguish models where pure language membership cannot. In an identification-in-the-limit setting, however, we typically assume the learner has access only to which outputs appear, not the true underlying probabilities. (We will later discuss alternative learning criteria where samples are drawn according to the model’s probabilities rather than adversarially.) Under the standard definition, if the class of models is infinite, introducing probabilities does not overcome the fundamental obstacle identified above.

To formalize this, we first define what we mean by a probabilistic language and identification in that context:

\begin{definition}
A \emph{probabilistic formal language} $L_p$ over alphabet $\Sigma$ is a probability distribution $P$ on $\Sigma^*$ (the set of all finite strings over $\Sigma$) such that $P(s) > 0$ if and only if $s$ is in the support of $L_p$ (denoted $\mathop{\mathrm{supp}}(L_p)$). We say a string $s$ is \emph{accepted} by $L_p$ if $P(s) > 0$. The support $\mathop{\mathrm{supp}}(L_p)$ is then a (deterministic) formal language consisting of all strings the probabilistic language can produce with non-zero probability. 
\end{definition}

\begin{observation}
Any ordinary (deterministic) formal language $L$ can be viewed as a probabilistic formal language that assigns probability $1$ to each string in $L$ (normalized uniformly or in any arbitrary way) and probability $0$ to strings outside $L$. In other words, deterministic languages are a special case of probabilistic languages (with probabilities restricted to 0 or 1).
\end{observation}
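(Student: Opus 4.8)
The plan is to exhibit, for an arbitrary deterministic language $L \subseteq \Sigma^*$, a concrete probability distribution $P$ on $\Sigma^*$ whose support is exactly $L$; by the preceding definition this $P$ is then a probabilistic formal language with $\mathrm{supp}(L_p) = L$, which is precisely the sense in which $L$ ``is'' a probabilistic language. The only real content of the statement is that such a $P$ exists for \emph{every} $L$, so I would organize the argument around constructing $P$ explicitly and verifying the support condition $P(s) > 0 \iff s \in L$. The phrase ``probability $1$ to each string'' in the statement is deliberately loose, and I would first pin down that the precise claim is the existence of a distribution with support $L$.

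First I would split on the cardinality of $L$. If $L$ is finite and nonempty, the uniform distribution $P(s) = 1/|L|$ for $s \in L$ and $P(s) = 0$ otherwise is manifestly a probability distribution (its masses sum to $1$) whose support is $L$; this is the literal ``normalized uniformly'' reading. If $L$ is empty the claim is vacuous, since an empty support carries no probabilistic content.

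The main obstacle is the infinite case, where ``probability $1$ to each string'' cannot be read literally: equal positive mass on infinitely many strings cannot sum to $1$. Here I would invoke that $\Sigma^*$ is countable, hence $L$ is countable, fix any enumeration $L = \{s_1, s_2, s_3, \dots\}$, and set $P(s_i) = 2^{-i}$ with $P(s) = 0$ for $s \notin L$. Then $\sum_{i \ge 1} P(s_i) = \sum_{i \ge 1} 2^{-i} = 1$, so $P$ is a genuine distribution, and $P(s) > 0$ holds exactly when $s \in L$. This realizes the ``arbitrary'' normalization alluded to in the statement; any other strictly positive summable weighting would serve equally well, which is worth remarking to emphasize that nothing hinges on the particular choice.

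Finally I would observe that in both cases $\mathrm{supp}(L_p) = L$, so a string is accepted by $L_p$ (i.e.\ $P(s) > 0$) if and only if it lies in $L$. Thus the membership structure of the deterministic language is recovered exactly by the $\{0, >0\}$ dichotomy of the probabilistic language, and the assignment $L \mapsto L_p$ embeds deterministic languages into probabilistic ones as the special case in which every nonzero probability is collapsed to the single qualitative value ``positive.'' Since the identification-in-the-limit criterion depends only on which strings appear---that is, on the support---this embedding is faithful for attribution purposes, which is exactly what will allow the negative result of the previous section to transfer to the probabilistic setting in the sequel.
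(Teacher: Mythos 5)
Your proposal is correct and matches the paper's intent: the paper states this as an observation without proof, and its parenthetical ``normalized uniformly or in any arbitrary way'' is exactly the normalization you carry out explicitly (uniform for finite $L$, a summable positive weighting such as $2^{-i}$ over an enumeration for infinite $L$). Your reading of the loose phrase ``probability $1$ to each string'' as ``existence of a distribution with support exactly $L$'' is the right one, and your closing remark that identification in the limit depends only on the support is precisely how the paper uses the observation in Section~3.
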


Given this observation, we see that the class of probabilistic languages strictly generalizes the class of deterministic languages: every discrete language corresponds to many possible probabilistic languages that have that language as their support. Therefore, if an infinite class of deterministic languages is not identifiable, then an infinite class of probabilistic languages that includes those (as 0-1 special cases) will also not be identifiable. Any learning algorithm for the probabilistic case would in particular solve the deterministic case, which we know is impossible in the scenario above.

We can extend the definition of identification in the limit to probabilistic languages. One natural way is to require the learner to output not just a single hypothesis language at each step but a probability distribution over candidate languages, reflecting uncertainty, and to converge in probability to the correct language. A rigorous definition (adapted from Definition~\ref{def:gold_original}) is as follows:

\begin{definition}[Identification in the Limit for Probabilistic Languages]
\label{def:gold_prob}
A class of probabilistic languages $\mathcal{L}_p$ is \emph{identifiable in the limit (from positive data)} if there exists a learning algorithm $\mathcal{A}$ such that for any target probabilistic language $L^*_p \in \mathcal{L}_p$ with support $L^* = \mathop{\mathrm{supp}}(L^*_p)$, given an infinite sequence of example strings $\langle s_1, s_2, \dots \rangle$ where each $s_i \in L^*$ and each string in $L^*$ appears at least once, the algorithm produces a sequence of probability distributions over $\mathcal{L}_p$, $\langle P_1, P_2, \dots \rangle$, with the property that:
\begin{enumerate}[label=(\arabic*)]
    \item For all but finitely many $n$, the most likely hypothesis under $P_n$ is the true language $L^*_p$. (Formally, $\arg\max_{L_p \in \mathcal{L}_p} P_n(L_p) = L^*_p$ for all sufficiently large $n$.)
    \item At all times $n$, the support of $P_n$ (the set of hypotheses given non-zero probability) consists only of languages that are consistent with the data observed so far. That is, if $P_n(L_p) > 0$ then $\{s_1,\dots,s_n\} \subseteq \mathop{\mathrm{supp}}(L_p)$.
\end{enumerate}
\end{definition}

This definition ensures that eventually the learner assigns highest confidence to the correct probabilistic language, while always ruling out any languages that have already been contradicted by the observations. If such an identification is possible, we would say the class is identifiable in this probabilistic sense.

However, using the observation above, if we have an infinite class of probabilistic languages $\mathcal{L}_p$ that includes an infinite deterministic sub-class of the kind described in Section~2, then $\mathcal{L}_p$ cannot be identifiable. In particular, consider any scenario with an infinite sequence of possible models such that one model’s support language contains another’s, which contains another’s, and so on (like $L_{\infty} \supset L_{k} \supset L_{k-1} \supset \cdots$). If the learner sees all strings from the smallest language in that chain, it has also seen strings from all larger ones; without probability information in the sample selection, it cannot tell whether it’s receiving data from the smallest language or a larger one, since all observed strings are consistent with either being the case. The probabilistic aspects of the target model do not manifest in the \emph{set} of observed strings—only in how frequently they might appear, which in the adversarial presentation model is not specified. The identification in the limit framework (as defined) is adversarial in that the sequence of examples can be arranged in any order as long as every string eventually appears; in particular, it does not assume the examples are drawn from the model's own distribution.

In conclusion, any impossibility that holds for infinite deterministic classes carries over to infinite probabilistic classes. The safe assumption is:
\begin{quote}
\emph{If the class of candidate LLMs is unbounded (infinitely many possible models/languages), then no general attribution algorithm can identify the source model with certainty, even if models are probabilistic.}
\end{quote}
This result reinforces the pessimistic outlook for attribution in a scenario with open-ended model classes. It suggests that to have any hope for theoretical identifiability, one must drastically restrict the hypothesis space---for instance, by assuming only a finite (and manageable) set of candidate models.

\section{Finite Classes of Discrete LLMs: Identifiability}
We now turn to the case where the number of candidate models (and hence candidate languages) is finite. This scenario is much more favorable. If $\mathcal{L} = \{L_1, L_2, \dots, L_N\}$ is a finite set of languages, the tell-tale sets required for identification by Angluin's condition can easily be constructed. 
The target $L^*=L_i$ is one of the $N$ possibilities, and constructing the tell-tale set $T_i$ for $L_i$ is now easy since there are only a finite number of languages properly contained in $L_i$, and for each such $L_j$ there must be a string in $L_i-L_j$ that we can add to $T_i$. Thus $T_i$ is a finite subset of $L_i$ satisfying the tell-tale set condition of Angluin.


More formally, we can state:

\begin{proposition}
Any finite collection of languages $\mathcal{L} = \{L_1, L_2, \dots, L_N\}$ is identifiable in the limit from positive data (assuming each $L_i$ is recursive, so that an algorithm can test membership).
\end{proposition}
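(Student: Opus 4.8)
The plan is to apply Angluin's tell-tale criterion (Theorem~\ref{thm:angluin}) directly, exploiting the fact that finiteness of $\mathcal{L}$ makes the construction of the required tell-tale sets completely elementary. Since $\mathcal{L} = \{L_1, \dots, L_N\}$ is a fixed finite family of recursive languages, the recursive-enumerability requirement on the family $\{T_i\}$ is automatic: any finite collection of finite sets is trivially recursively enumerable, so the only real content is to show that each $L_i$ admits a finite subset $T_i$ that is not contained in any $L_j$ that is a proper subset of $L_i$.

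First I would fix an index $i$ and consider the (at most $N-1$) other languages $L_j$ in the family. I would partition these into two groups: those $L_j$ with $L_j \not\supseteq L_i$ and those with $L_i \subseteq L_j$. For the second group no witness is needed, since Angluin's condition only requires ruling out \emph{proper subsets} of $L_i$. For the first group, each such $L_j$ fails to contain $L_i$, so the set difference $L_i \setminus L_j$ is nonempty; I would pick one witness string $w_{ij} \in L_i \setminus L_j$. Defining
\[
T_i \;=\; \{\, w_{ij} : j \neq i,\ L_j \not\supseteq L_i \,\},
\]
we obtain a finite subset of $L_i$ (at most $N-1$ elements) with the property that $T_i \not\subseteq L_j$ for every $j$ such that $L_j$ is a proper subset of $L_i$ (indeed for every $j$ with $L_j \not\supseteq L_i$). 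This is exactly the tell-tale condition, so by Theorem~\ref{thm:angluin} the family is identifiable in the limit. For completeness I would also sketch the concrete learning algorithm this induces: at step $n$, output the index of any $L_i$ consistent with $\{s_1,\dots,s_n\}$ (i.e.\ $\{s_1,\dots,s_n\} \subseteq L_i$) that is minimal under inclusion among the consistent candidates, breaking ties by smallest index; recursiveness of each $L_i$ guarantees these membership tests are computable, and the tell-tale property guarantees convergence once the finitely many witness strings have all appeared.

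The steps that look routine are genuinely routine here, precisely because we are in the finite regime; the subtlety that caused non-identifiability in Sections~2 and~3 was the \emph{infinitude} of candidate subsets of $L_\infty$, which made it impossible to collect finitely many witnesses. The one point I would be careful to state correctly is the handling of the subset/superset structure: the tell-tale set must defeat proper subsets of $L_i$ but need not (and cannot) defeat proper supersets, so the witnesses are drawn from $L_i \setminus L_j$ only for those $L_j$ that do not already contain $L_i$. If I were not careful, I might try to separate $L_i$ from \emph{every} other $L_j$ and run into the (correct) observation that one cannot distinguish $L_i$ from a proper superset on positive data alone; the resolution is that identification in the limit permits convergence to $L_i$ whenever $L_i$ is the inclusion-minimal consistent hypothesis, which is exactly what the minimal-consistent-hypothesis learner achieves. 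This superset/subset bookkeeping is the only place where an careless argument could go wrong, so it is the main (though mild) obstacle.
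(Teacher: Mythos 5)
Your proof is correct and follows essentially the same route as the paper: both construct Angluin tell-tale sets $T_i$ by picking one witness string from $L_i \setminus L_j$ for each relevant $L_j$, which is possible precisely because the family is finite, and then invoke Theorem~\ref{thm:angluin}. Your version is slightly more detailed (you handle the recursive-enumerability point, the subset/superset bookkeeping, and sketch the induced minimal-consistent-hypothesis learner explicitly), but the core argument is the same.
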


\begin{proof}[Proof (Sketch)]
As the collection is finite the tell-tale sets required for identification by Angluin's condition can be constructed as follows. Consider a language $L_i \in \mathcal{L}$ and construct the tell-tale set $T_i$ for $L_i$ as follows. 
Consider the finite number of languages properly contained in $L_i$, and for each such $L_j$ take a string in $L_i-L_j$ and add it to $T_i$. When we are done $T_i$ is a finite subset of $L_i$ satisfying the tell-tale set condition of Angluin's Theorem \ref{thm:angluin}.
\end{proof}

In summary, when the universe of possible LLMs is \emph{finite}, the attribution problem is solvable in theory. After seeing enough outputs, the attacker (or attribution algorithm) can, in effect, find a signature of the source model’s language. This result assumes we know the exact set of possible models in advance and that they have distinct output capabilities. In practical terms, this might correspond to a scenario where a small number of specific models are suspect (for example, a fixed set of known bots or generators that might have produced a given text). In such cases, especially if the models are sufficiently different, targeted attribution can succeed by cross-checking the observed outputs against each model’s known outputs.

It is worth noting, however, that Angluin’s theorem is non-constructive in the sense that it guarantees the existence of tell-tale sets but doesn’t necessarily provide an easy way to find them. In practice, even if $N$ is modest, discovering the distinguishing outputs between each pair of models might be difficult without additional information or oracles (like black-box query access to models). Nevertheless, from a purely information-theoretic viewpoint, the finite case poses no fundamental barrier to identification: given unlimited data, one can eventually tease apart any finite number of distinct languages.

Unfortunately, this optimistic scenario breaks down even with very slight relaxations of assumptions. We will see that if we add probabilistic variation to the outputs, even just two models can become impossible to distinguish in the limit.

\section{Finite Classes of Probabilistic LLMs: A Counterexample}
We now address the most subtle regime: a finite collection of probabilistic LLMs. One might hope that with only a finite number of models to consider, attribution remains feasible (as in the deterministic case). However, probability distributions can introduce ambiguity that does not occur with deterministic languages. In particular, different models can share the same support (i.e. they can all produce the same set of strings, but with different probabilities). If two models $M_1, M_2$ have $\mathop{\mathrm{supp}}(L_p(M_1)) = \mathop{\mathrm{supp}}(L_p(M_2))$ (they can produce exactly the same strings, just with different likelihoods), then any sequence of outputs that is not annotated with likelihood information cannot distinguish them---because whatever string appears, it could have come from either model. The only way to tell them apart would be to notice differences in the relative frequency or probability of outputs, but in the identification-in-the-limit framework the presentation of examples is controlled adversarially (we only assume every possible string eventually appears, not that we see them according to the model’s true distribution).

We present a simple but striking example demonstrating that even two probabilistic languages can foil any identification algorithm under Gold’s paradigm:

\begin{theorem}
Even for a finite number of probabilistic formal languages (as few as two), identification in the limit does not always hold.
\label{thm:probabilistic_negative}
\end{theorem}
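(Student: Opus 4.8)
The plan is to exhibit two explicit probabilistic languages with identical support and then argue that the adversarial presentation model deprives any learner of the frequency information that is the only thing distinguishing them. First I would fix the simplest possible support so that the construction is transparent: take the alphabet $\Sigma=\{a\}$ and let both languages share support $\mathop{\mathrm{supp}}(L_p^{(1)})=\mathop{\mathrm{supp}}(L_p^{(2)})=\{a^n : n\ge 1\}$, i.e.\ every nonempty unary string. I would then endow them with two \emph{distinct} distributions over this common support — for instance a geometric law $P_1(a^n)=2^{-n}$ against a different decaying law $P_2(a^n)=c\cdot 3^{-n}$ (with $c$ chosen to normalize) — so that $P_1\neq P_2$ as functions yet $\{s : P_i(s)>0\}$ is the same set for $i=1,2$. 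The point is that these are two genuinely different objects in the class $\mathcal{L}_p=\{L_p^{(1)},L_p^{(2)}\}$, so a successful identifier must eventually commit to the correct one.

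The heart of the argument is to recall exactly what data the learner receives under Definition~\ref{def:gold_prob}: an infinite sequence $\langle s_1,s_2,\dots\rangle$ in which each string of the support appears at least once, but whose \emph{ordering and multiplicity are chosen adversarially}, with no promise that empirical frequencies track the true distribution. I would then observe that because the two supports coincide, every text that is a valid presentation of $L^{(1)}$ is simultaneously a valid presentation of $L^{(2)}$, and vice versa: any enumeration of $\{a^n:n\ge1\}$ in which each term eventually occurs satisfies the ``every string appears'' requirement for both languages simultaneously. Consequently the learner's input stream carries literally zero information about which of $P_1,P_2$ generated it — the stream is consistent with both by the definition's own consistency clause, condition~(2), which constrains only supports, not probabilities.

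To turn this into a clean impossibility I would run the standard adversary/diagonalization argument against an arbitrary candidate algorithm $\mathcal{A}$. Suppose for contradiction that $\mathcal{A}$ identifies $\mathcal{L}_p$ in the limit. Fix one presentation $\sigma$ that enumerates the common support $\{a^n:n\ge1\}$ (say in the order $a,a^2,a^3,\dots$). Since $\sigma$ is a valid presentation of $L_p^{(1)}$, condition~(1) forces $\mathcal{A}$ to output $L_p^{(1)}$ as its top hypothesis for all but finitely many $n$; but $\sigma$ is equally a valid presentation of $L_p^{(2)}$, so condition~(1) also forces the top hypothesis to be $L_p^{(2)}$ cofinitely. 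Since $L_p^{(1)}\neq L_p^{(2)}$ these two requirements are incompatible on the single stream $\sigma$, a contradiction. Hence no such $\mathcal{A}$ exists and $\mathcal{L}_p$ is not identifiable in the limit, establishing the theorem.

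The main obstacle I anticipate is not the combinatorics but pinning down the semantics of the definition precisely enough that the argument is airtight: I must verify that Definition~\ref{def:gold_prob} really does let a single presentation be legal for two distinct target languages, and that the consistency clause~(2) genuinely refers to supports rather than to any latent probability that could be read off the sample. If the definition were instead a \emph{stochastic} presentation model — where $s_i$ are drawn i.i.d.\ from the true $P_i$ so that empirical frequencies converge to the generating distribution — this counterexample would collapse, since $P_1$ and $P_2$ could then be separated by the law of large numbers. I would therefore emphasize in the write-up that the negative result is a direct consequence of the \emph{adversarial} (non-stochastic) nature of Gold-style presentation, and note explicitly that it is exactly this modeling choice, rather than any intrinsic indistinguishability of $P_1$ and $P_2$, that drives the impossibility; this framing also foreshadows the paper's later discussion of stochastic-sampling learning criteria.
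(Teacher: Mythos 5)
Your proposal is correct and takes essentially the same approach as the paper: two probabilistic languages with identical support over a unary alphabet but distinct distributions, with the impossibility following from the fact that the adversarial (support-only) presentation model makes any valid text for one language equally valid for the other. Your version of the final step---fixing a single presentation $\sigma$ valid for both targets and noting that condition~(1) would force the learner to converge to both hypotheses simultaneously---is in fact a slightly crisper rendering of the paper's adversary argument, and your observation that only distinctness of the distributions (not the paper's particular telescoping construction) is needed is accurate.
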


\begin{proof}
Consider an alphabet consisting of a single symbol $\Sigma = \{x\}$. We define two probabilistic languages $L_{p,1}$ and $L_{p,2}$ over $\Sigma$ as follows. Both $L_{p,1}$ and $L_{p,2}$ accept \emph{every non-empty string} over $\Sigma$ (so their support is the same language $L = \{x, x^2, x^3, \dots\}$). However, they differ in the probability distribution assigned to these strings. Let $P_1$ and $P_2$ denote the probability mass functions for $L_{p,1}$ and $L_{p,2}$ respectively:
\[
P_{1}(x^n) = 
\begin{cases}
\frac{1}{2^n} - \frac{1}{2^{n+2}}, & \text{if $n$ is odd},\\[6pt]
\frac{1}{2^n} + \frac{1}{2^{n+1}}, & \text{if $n$ is even},
\end{cases}
\]
and
\[
P_{2}(x^n) = 
\begin{cases}
\frac{1}{2^n} - \frac{1}{2^{n+1}}, & \text{if $n$ is even},\\[6pt]
\frac{1}{2^n} + \frac{1}{2^{n+2}}, & \text{if $n$ is odd}.
\end{cases}
\]
One can verify that each of these defines a proper distribution over $n=1,2,3,\dots$ (the probabilities sum to 1 for each, since the alternating added and subtracted terms cancel out telescopically). Importantly, note that:
\begin{itemize}
    \item For both $L_{p,1}$ and $L_{p,2}$, every string $x^n$ with $n\ge1$ has non-zero probability. Thus, $\mathop{\mathrm{supp}}(L_{p,1}) = \mathop{\mathrm{supp}}(L_{p,2}) = L$ (the set of all non-empty strings of $x$).
    \item The distributions differ only in the \emph{sign} of the small adjustments. For instance, for $n$ even, $P_1(x^n)$ is a bit larger than the baseline $1/2^n$, whereas $P_2(x^n)$ is a bit smaller; for $n$ odd, it’s the opposite.
\end{itemize}
Now, suppose we have an identification algorithm $\mathcal{A}$ that sees an infinite presentation of strings that come from one of these two probabilistic languages (but $\mathcal{A}$ does not know which one). By the definition of identification in the limit, the presentation is an adversarial sequence that contains every string in $L$ at least once (since the support is $L$ itself). Crucially, because $\mathcal{A}$ must succeed for \emph{any} sequence that meets this criterion, we can adversarially choose the order in which strings appear. In particular, we can arrange for $\mathcal{A}$ to see all possible strings in some order that completely hides any statistical bias. For example, the adversary could present the strings in lexicographic order (or by increasing length). In such an arrangement, the data sequence $\langle s_1, s_2, \dots \rangle$ would be \emph{the same} whether the underlying source is $L_{p,1}$ or $L_{p,2}$, since both can generate all those strings.

Since $L_{p,1}$ and $L_{p,2}$ have identical supports, any sequence of distinct strings from that support is consistent with either model having produced it. The identification algorithm $\mathcal{A}$, by requirement (2) of Definition~\ref{def:gold_prob}, cannot eliminate either hypothesis until it encounters some evidence that contradicts one of them. But there will be no such evidence in terms of observed strings, because any finite set of observed strings $S$ will be a subset of $L$, and both hypotheses $L_{p,1}$ and $L_{p,2}$ are consistent with $S$.

Therefore, $\mathcal{A}$ can never be sure whether the source is $L_{p,1}$ or $L_{p,2}$. We can even say that for any strategy $\mathcal{A}$ might use to eventually choose one of the two, we (as an adversary) can define the data sequence in such a way that $\mathcal{A}$’s final guess is wrong. For instance, if $\mathcal{A}$ plans to guess “Model~1” after seeing some sufficiently long prefix of data, we can orchestrate the data (which, again, does not violate consistency with Model~2) such that $\mathcal{A}$ is misled. In other words, whichever model $\mathcal{A}$ converges on, we ensure the actual model was the other one.

Thus, no identification algorithm can guarantee to output the correct model $L_{p,1}$ vs $L_{p,2}$ after finitely many steps on all valid input sequences. This proves that the class $\{L_{p,1}, L_{p,2}\}$ is not identifiable in the limit.
\end{proof}

The above proof shows a pathological but illuminating construction: we made two models that are indistinguishable by language membership alone yet differ in their probabilistic structure. The adversary exploited the fact that identification in the limit does not assume random sampling according to the model distribution; if instead we did assume the samples were drawn from the model’s distribution, then over time one might detect the slight differences in frequencies (this would move us into the territory of probabilistic identification or Bayesian inference with enough data). But under Gold’s paradigm (which is adversarially robust, considering worst-case presentation of data), the difference in distributions is irrelevant because the adversary can always present data in a way that conceals it. Notably, this is the first example of identification failure in a finite hypothesis class of probabilistic languages. Previously, impossibility results (e.g. Gold 1967) required an infinite or uncountable class, or deterministic outputs – our counterexample shows even two stochastic models can confound any attribution attempt. This closes a long-standing gap in the theory.

From an LLM attribution standpoint, this example captures a real concern: modern generative models (especially fine-tuned ones) often have very similar capabilities, differing mostly in the probabilities they assign to various outputs or in subtleties of style. If two models $M_1$ and $M_2$ have been trained on largely overlapping data, $M_1$ might respond to a prompt almost the same way as $M_2$, with only slight differences in phrasing probabilities. Any fixed set of outputs that $M_1$ can produce, $M_2$ might also be able to produce (especially if the prompt is chosen adversarially to maximize confusion). Our theoretical result shows that in the worst case, if their output supports are identical, an attacker who only sees whether an output happened or not (not how often among many trials) cannot distinguish them. 

It is worth noting that the proof’s construction might seem artificial (the probability mass functions were carefully designed). But one can imagine more natural scenarios: for example, two language models that have the \emph{exact same range of expression} (say, both have memorized the same set of internet texts), but one is fine-tuned to prefer certain styles more than the other. Without many samples to do statistical analysis, any single piece of text produced by one model could also have been produced by the other. Identification in the limit says we can have as many samples as we want in the long run, but since the adversary controls the ordering, they can always choose a diverse set that doesn’t reveal the biases. In effect, the adversary can simulate the distribution of the other model by interleaving outputs.

The proof above relies on the two probabilistic languages having identical support (as formal languages). However, this situation captures well the situation with LLMs where one can force them to output almost anything. In other words, if the models are sufficiently expressive (e.g., large GPT-style models can be prompted to talk about almost any topic), then differences lie not in what they \emph{can} say but in what they \emph{tend} to say (sometimes refered as the \emph{propensity} of an LLM). If one model can be coerced (through prompt or context) to imitate the style of another, then purely from the fact that a certain output was observed, we gain no information about which model was behind it.

In summary, we have established that:
\begin{itemize}
    \item If we have infinitely many possible models, attribution is theoretically impossible (Sections~2 and~3).
    \item If we have finitely many models \emph{and} treat them as producing deterministic languages, attribution is possible in principle (Section~4).
    \item If we have finitely many models but they produce outputs probabilistically and have overlapping capabilities, attribution can again become impossible (this section).
\end{itemize}
This paints a rather bleak theoretical picture: the only regime that avoids impossibility is the one with a finite, discrete set of hypotheses. In practice, the space of potential models (especially fine-tuned variants) is enormous and effectively unbounded, and models are stochastic. Therefore, the negative results seem most relevant to real-world conditions. Next, we turn to empirical evidence to quantify just how large the hypothesis space has grown, and we analyze the computational limits of brute-force attribution approaches.

\section{The Rapid Expansion of the LLM Hypothesis Space}
Thus far, our theoretical analysis suggests that unless the set of candidate models is very limited, one cannot reliably attribute a given output to its true source in the worst case. In practice, the ecosystem of LLMs is anything but limited: it is rapidly growing, with new models and fine-tuned versions emerging constantly. In this section, we present a data-driven analysis that illustrates the scale of the problem.

We assembled data from Stanford University's \emph{Ecosystem Graphs} project~\cite{bommasani2023ecosystem-graphs}, which documents released AI models and datasets over time. The dataset we use includes information on 359 language models (as of January 2025) and 112 datasets, among other assets. For our purposes, we focus on:
\begin{itemize}
    \item The number of distinct base models (checkpoints) released, including whether they are open-source or closed-source.
    \item The number of distinct datasets released over time.
\end{itemize}
Using these, we can estimate how many potential fine-tuned variants could exist by combining open models with available datasets.

Specifically, let:
\[ C(t) = \#\{\text{closed or restricted models released up to time }t\}, \]
\[ O(t) = \#\{\text{open-source model checkpoints released up to time }t\}, \]
\[ D(t) = \#\{\text{datasets released up to time }t\}. \]
Here, $C(t)$ and $O(t)$ partition the total number of models by accessibility (closed vs open), and $D(t)$ measures the cumulative count of distinct datasets.

A \emph{conservative} scenario for the growth of distinct fine-tuned models is to assume each open-source model can be fine-tuned on at most one dataset. In that case, by time $t$ the total number of (base model or fine-tuned) model variants is at least:
\[
N_{\mathrm{single}}(t) = C(t) \;+\; O(t)\,\big[\,1 + D(t)\big],
\] 
where $O(t)$ models can each give rise to at most one fine-tuned variant on each of the $D(t)$ datasets, hence $O(t)\,D(t)$ possible fine-tunes, plus the base models themselves ($O(t)$) and the closed models $C(t)$ which might not be fine-tuned openly. We add $1$ in the bracket to count the base open models themselves (fine-tuned on “no new data”).

We can also consider more aggressive combinations. If each open model could be fine-tuned on up to two datasets (combined), the number of possible distinct outputs grows combinatorially:
\[
N_{k\le2}(t) = C(t) \;+\; O(t)\,\Big[\,1 + D(t) + \binom{D(t)}{2}\Big],
\] 
where $\binom{D(t)}{2}$ is the number of ways to pick two distinct datasets to jointly fine-tune. Similarly, allowing up to three datasets per fine-tune:
\[
N_{k\le3}(t) = C(t) \;+\; O(t)\,\Big[\,1 + D(t) + \binom{D(t)}{2} + \binom{D(t)}{3}\Big].
\]

These formulas provide lower bounds on the number of \emph{conceptually distinct} model variants one could have by mixing available ingredients (models and datasets). Not every combination is actually realized or made public, of course, but they represent the search space size that an attribution mechanism might have to contend with if an adversary could fine-tune or modify models arbitrarily using existing data.

Using our collected data, we computed these metrics from 2018 through the beginning of 2025. The results are striking:

\subsection{Global Growth in Candidate Models}
\begin{figure}[htbp]
    \centering
    \includegraphics[width=0.7\linewidth]{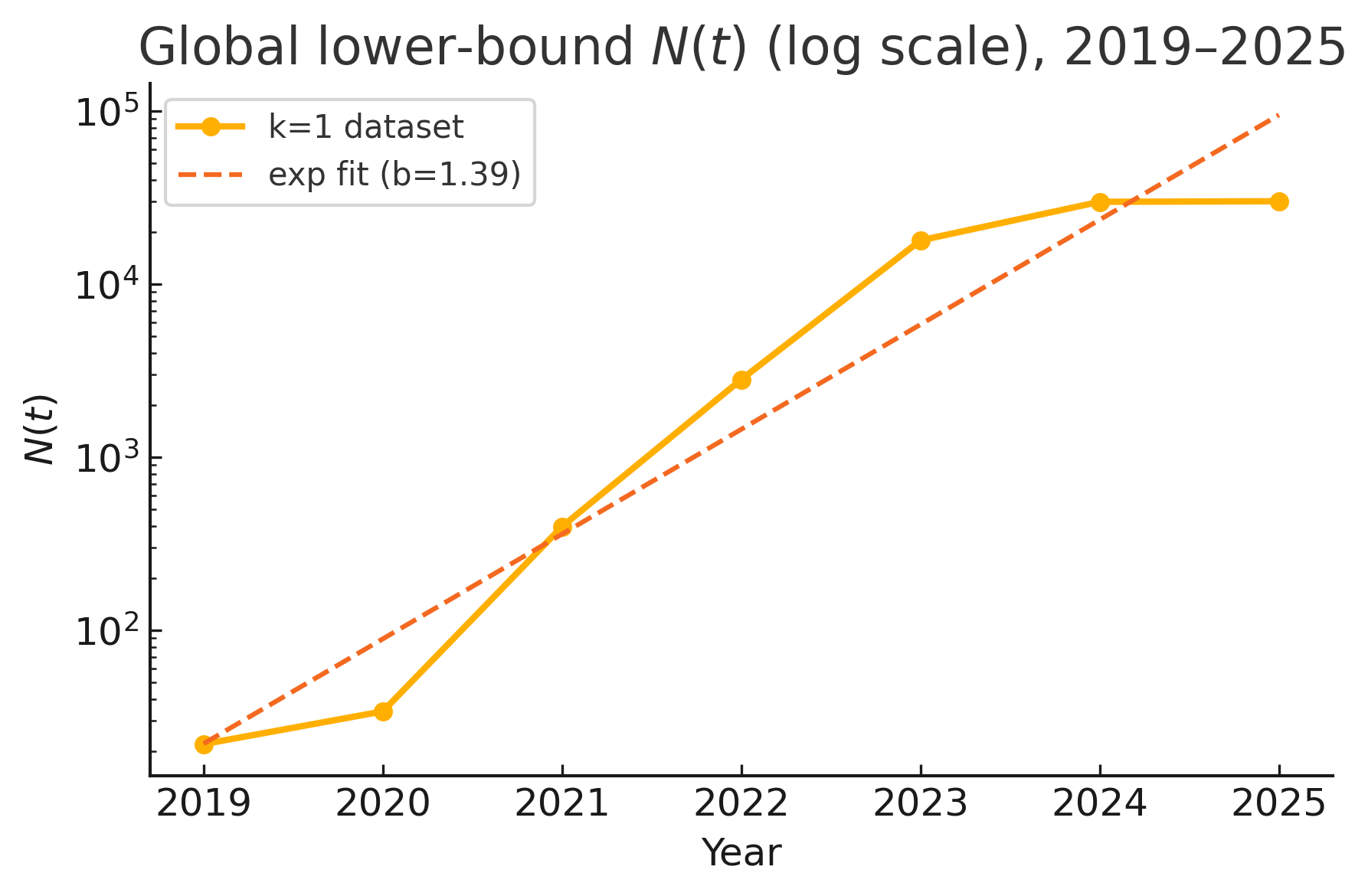}
     \caption{Growth of the combinatorial lower bound \(N_{\mathrm{single}}(t)\) (single-dataset fine-tunes per open model) from 2019 to 2025 on a logarithmic scale. The dashed line shows a least-squares exponential fit with estimated growth rate \(b = 1.39\,\mathrm{yr}^{-1}\), corresponding to a doubling time of \(\tau = \SI{0.50}{\year}\).}

    \label{fig:global}
\end{figure}

Figure~\ref{fig:global} shows the exponential growth of
\(N_{\mathrm{single}}(t)\), the conservative count of model variants
under a single-dataset fine-tune assumption.  Even so, the hypothesis
space swells from on the order of \(10^1\) plausible variants in 2019 to
\(10^4\) by 2023, and surpasses \(3\times10^4\) by 2025.  A least-squares
exponential fit (\(R^2 \approx 0.97\)) yields a growth rate
\(b = 1.39\:\mathrm{yr}^{-1}\), corresponding to a doubling time
\(\tau = \ln2/b \approx 0.50\:\mathrm{yr}\) (roughly six months).  In
other words, the effective hypothesis space for attribution doubles
twice per calendar year, far outpacing the pace of any brute-force
inspection.  

\subsection{Fine-Tuning on Multiple Datasets}
\begin{figure}[htbp]
    \centering
    \includegraphics[width=0.7\linewidth]{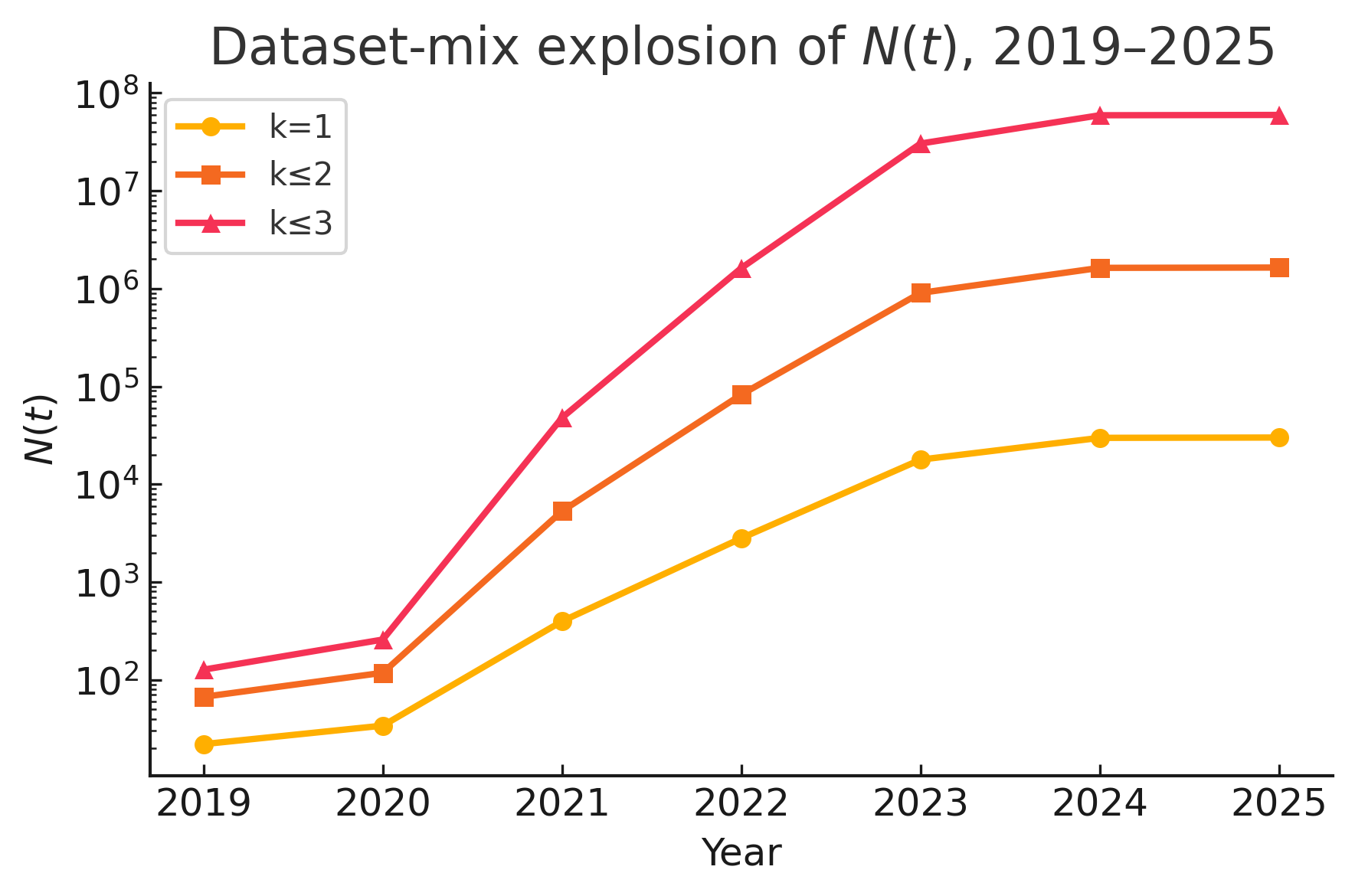}
    \caption{Projected growth of the conservative lower bound \(N(t)\)
           when permitting fine-tuning on combinations of up to
           \(k=2\) (squares) or \(k=3\) (triangles) datasets per
           checkpoint, from 2019 to 2025 on a logarithmic scale.  Under
           these assumptions, \(N_{k\le2}\) reaches
           \(\sim10^{6}\) variants by 2025 (doubling time
           \(\tau\approx0.36\) yr), and \(N_{k\le3}\) approaches
           \(\sim10^{7}\) (doubling time \(\tau\approx0.28\) yr).}
    \label{fig:mix}
\end{figure}

Figure~\ref{fig:mix} depicts the explosive growth of our combinatorial
lower bound \(N(t)\) when open-weight checkpoints are allowed to be
fine-tuned on up to two or three datasets.  In the \(k\le2\) scenario,
\(N_{k\le2}(t)\) reaches on the order of \(10^{6}\) variants by 2025,
while for \(k\le3\), \(N_{k\le3}(t)\) approaches \(10^{7}\).  Exponential
fits on \(\ln N\) (both with \(R^2\approx0.97\)) give growth rates
\[
  b_{k\le2} \approx 1.95~\mathrm{yr}^{-1}, \quad
  b_{k\le3} \approx 2.51~\mathrm{yr}^{-1},
\]
which correspond to doubling times
\(\tau_{k\le2} = \ln2 / b_{k\le2} \approx 0.36\:\mathrm{yr}\)
and
\(\tau_{k\le3} = \ln2 / b_{k\le3} \approx 0.28\:\mathrm{yr}\).
Thus, even a minimal two-dataset allowance doubles the hypothesis space
in under four months, and three-way mixes halve that interval to just
over three months.  Any brute-force attribution approach would soon be
outstripped by this relentless combinatorial explosion.  

In practical terms, even if only a small fraction of these combinations actually exist as deployed models, an attacker could potentially fine-tune a new model on a novel mix of data to evade known detectors. The defender, lacking knowledge of that specific fine-tune, would have to consider it as a possibility among an astronomically large set.

\subsection{Trends by Modality and Region}

To pinpoint which segments drive the hypothesis-space explosion, we
classified each checkpoint by its primary \emph{modality} (text, vision,
multimodal, audio, other, unknown) and by the developer’s geographic
\emph{region} (North America, Europe, Asia, Other).  We then recomputed
the conservative lower bound \(N_{\mathrm{single}}(t)\) for each slice.

\begin{figure}[htbp]
  \centering
  \begin{subfigure}[b]{0.48\textwidth}
    \includegraphics[width=\linewidth]{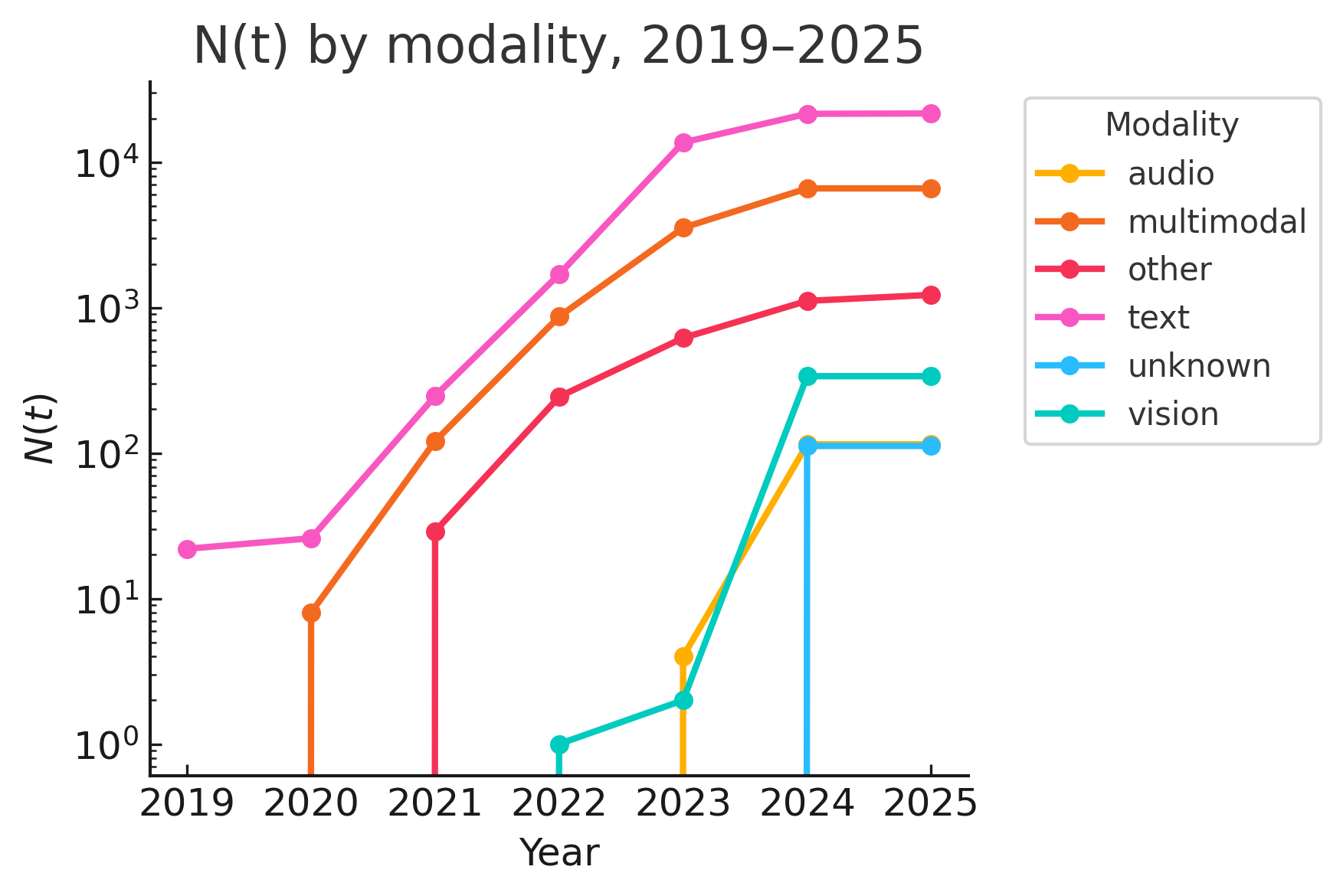}
    \caption{}
    \label{fig:modality}
  \end{subfigure}\hfill
  \begin{subfigure}[b]{0.48\textwidth}
    \includegraphics[width=\linewidth]{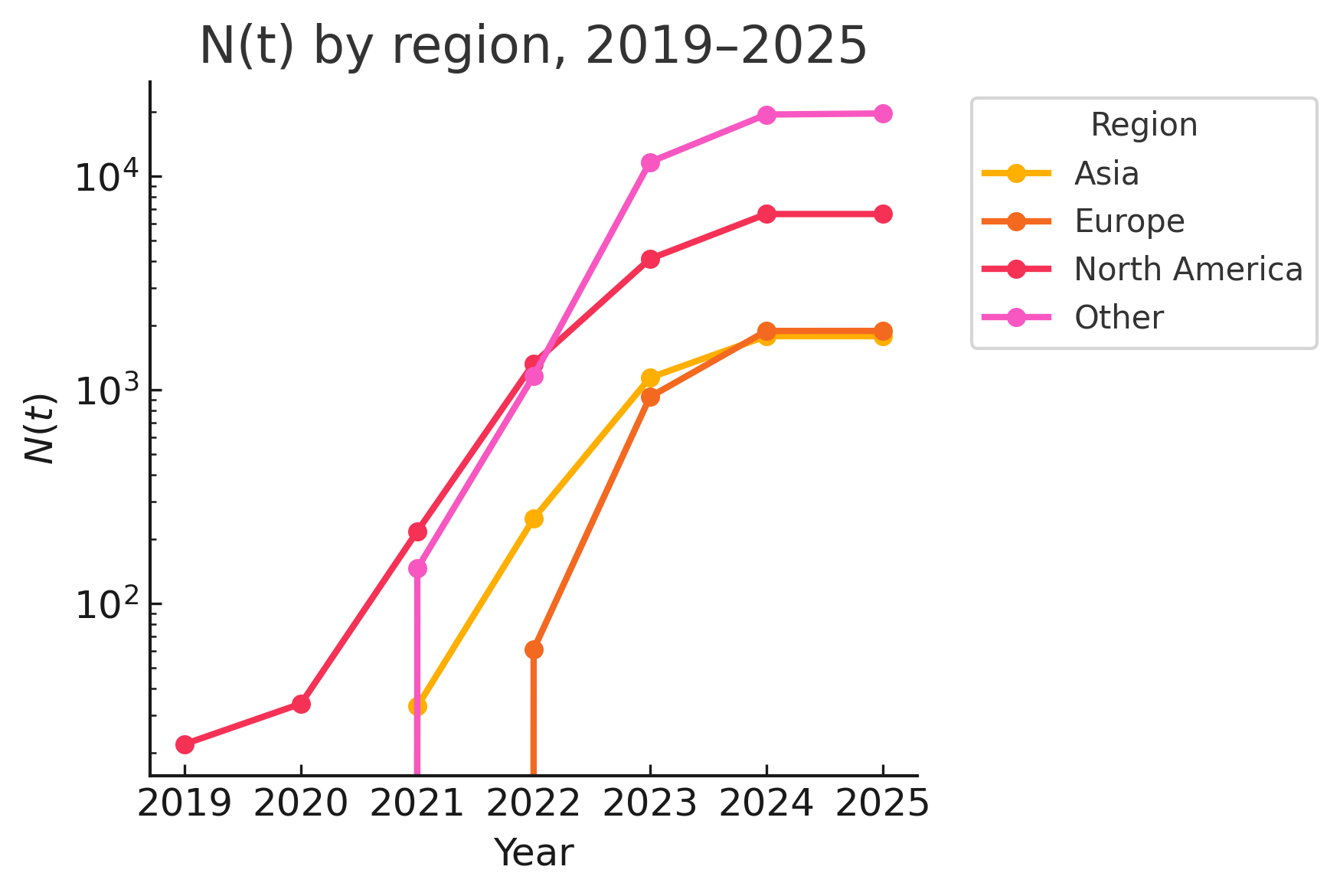}
    \caption{}
    \label{fig:region}
  \end{subfigure}
  \caption{Sub-population breakdowns of the attribution hypothesis
           space.  (a) Growth of \(N_{\mathrm{single}}(t)\) by model modality,
             2019–2025, on a log scale.  Text-only variants remain the
             largest class, but multimodal models exhibit the steepest
             relative growth, signaling imminent attribution
             challenges in image+text domains. (b) Growth of \(N_{\mathrm{single}}(t)\) by developer region,
             2019–2025, on a log scale.  North America led early
             releases, but Asia’s rapid uptake of open-weight models
             has narrowed the gap by 2024.  Europe and Other regions
             also show steady contributions.}
\end{figure}

Figure~\ref{fig:modality} demonstrates that while text models still
dominate in absolute count, the modal segment growing fastest (by slope)
is \emph{multimodal}, followed by vision and audio.  This trend warns
that non-text attribution (e.g.\ image and video) will soon face the
same combinatorial explosion as language models.

Figure~\ref{fig:region} shows a clear shift from a North-America–centric
model ecosystem toward a more balanced, global landscape.  Asian
organizations, particularly open-source contributors, now match or
exceed North American counts by late 2024.  Europe and Other regions
add further breadth.  Attribution frameworks must therefore be prepared
for suspect generators worldwide, not just from the major U.S. players.

These previously undocumented trends spotlight emerging areas (e.g. non-text modalities, non-Western model providers) where attribution efforts will face growing complexity.” 

\subsection{Exponential Growth Summary}
To summarize the quantitative growth, Table~\ref{tab:fits} reports the
exponential fit parameters for the overall $N(t)$ under different
fine-tuning assumptions from 2019 to 2025.  All scenarios show
$R^2 \approx 0.97$, indicating an almost perfect exponential trend.

\begin{table}[htbp]
\centering
\caption{Estimated exponential growth parameters for the hypothesis
         space $N(t)$ from 2019 to 2025 under different fine-tuning
         assumptions.  $b$ is the exponential growth rate per year,
         and $\tau = \ln2 / b$ is the doubling time.}
\label{tab:fits}
\begin{tabular}{lccc}
\toprule
\textbf{Metric} & \textbf{$b$ (yr$^{-1}$)} & \textbf{$R^2$} &
\textbf{Doubling time $\tau$ (yr)}\\
\midrule
$N_{\mathrm{single}}$ (1 dataset)   & 1.39 & 0.97 & 0.50 \\
$N_{k\le2}$ (up to 2 datasets)      & 1.95 & 0.97 & 0.36 \\
$N_{k\le3}$ (up to 3 datasets)      & 2.51 & 0.97 & 0.28 \\
\bottomrule
\end{tabular}
\end{table}

These figures underscore an unsustainable trajectory: even in the most
conservative single-dataset scenario, the hypothesis space doubles in
under six months.  With two-dataset mixes, it doubles in just over
four months, and with three-dataset mixes in under three and a half
months.  Any forensic catalog or fingerprinting system will struggle
to keep pace with such rapid doubling.

\section{Computational Feasibility of Exhaustive Attribution}

\begin{figure}[htbp]
  \centering
  \includegraphics[width=0.7\linewidth]{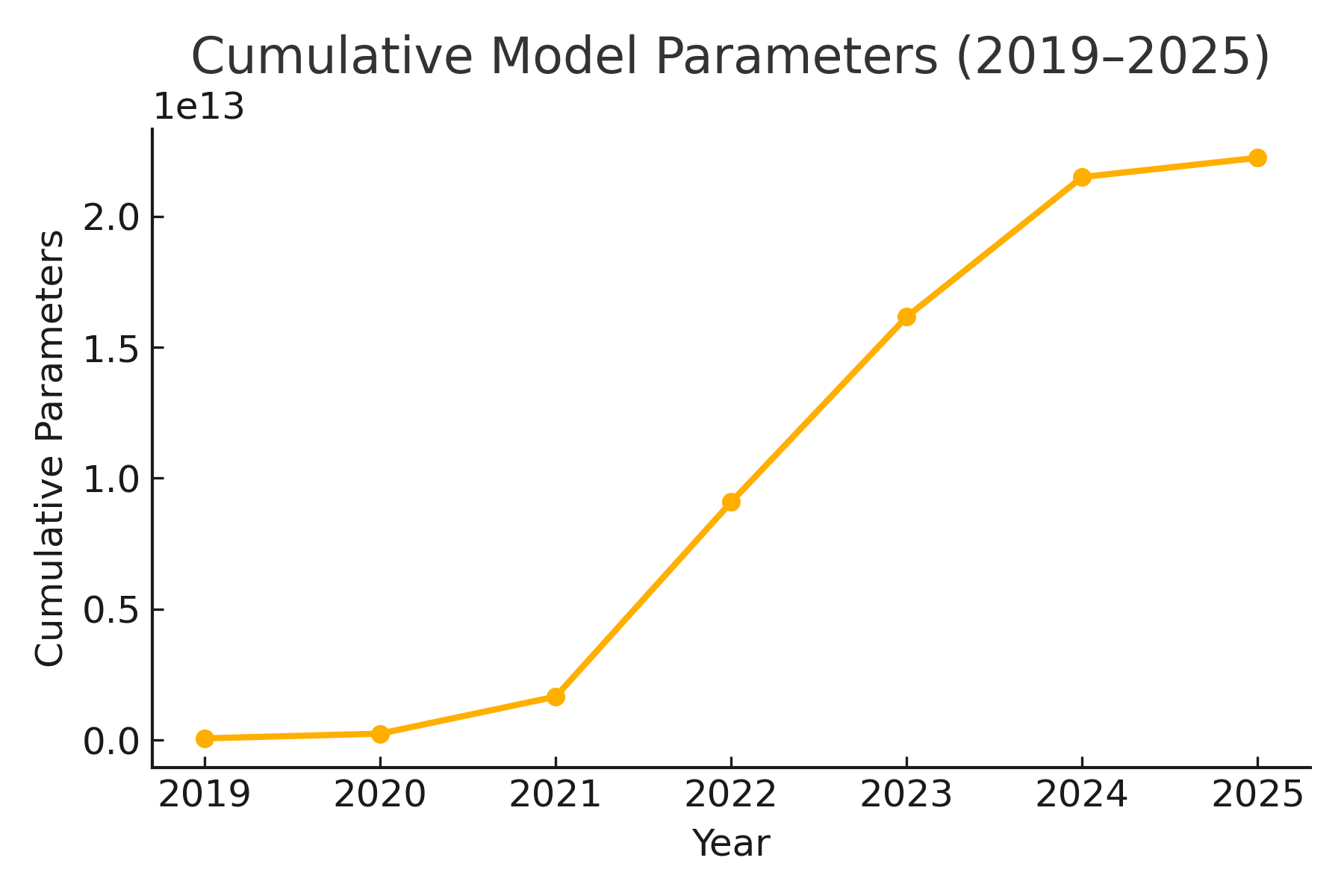}
  \caption{Cumulative total of model parameters (log scale) from 2019 to
           2025, with discrete year ticks.  The curve illustrates a
           two-order-of-magnitude jump in just three years,
           underscoring the steep “parameter cliff”.}
  \label{fig:param_cliff}
\end{figure}

To fully understand the challenge of attributing generated content to its original model, we must consider not just the combinatorial explosion of plausible model variants, but also the raw computational cost involved. A straightforward, brute-force approach to attribution—evaluating the likelihood of a given output against each known model—quickly becomes untenable as the number and size of models increase.

As a concrete baseline, we extracted parameter counts for all models available in our comprehensive ecosystem dataset through 2025. Early on, the cumulative size of all known models was modest: approximately $1.3 \times 10^{10}$ (13 billion) parameters in 2019. Yet by 2025, this figure surged dramatically to roughly $2.2 \times 10^{13}$ (22 trillion) parameters—an increase of nearly three orders of magnitude in just six years (Figure \ref{fig:param_cliff}). To estimate this growth, we conservatively imputed missing parameter sizes with the average of known model sizes, ensuring our estimate remains robust yet conservative.

\begin{figure}[htbp]
\centering
  \begin{subfigure}[b]{0.60\linewidth}
    \includegraphics[width=\linewidth]{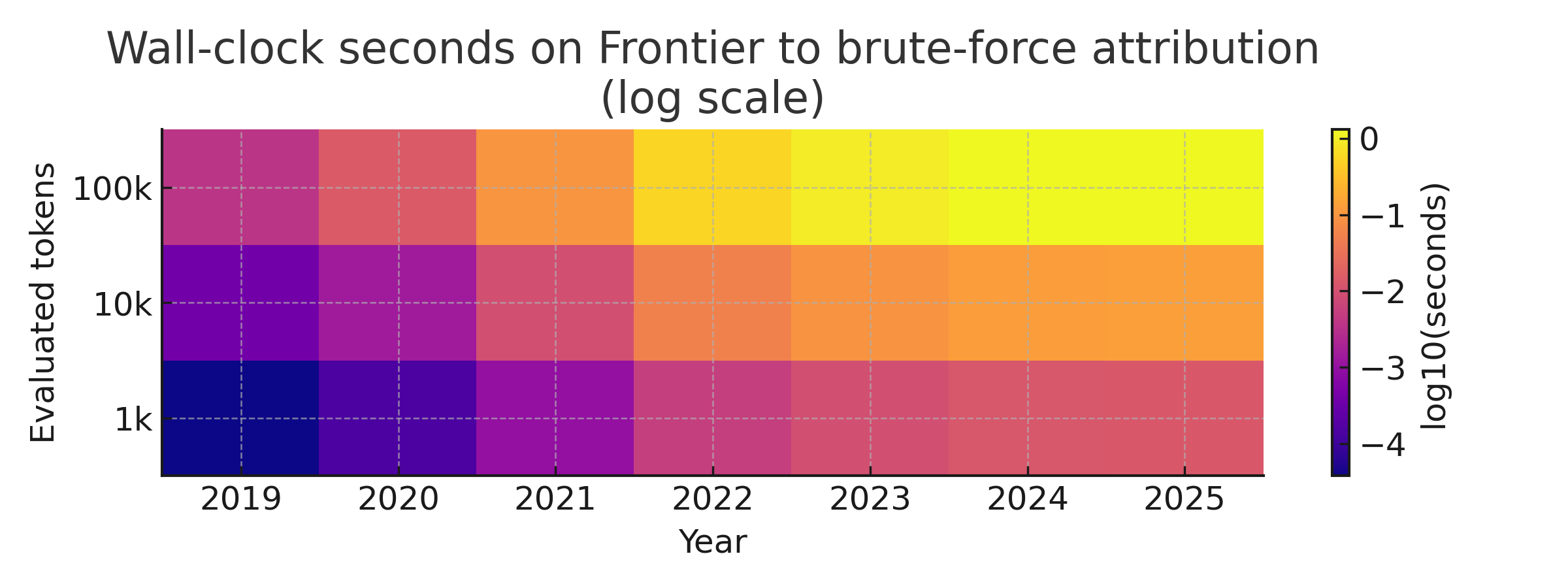}
    \label{fig:heatmap}
  \end{subfigure}\hfill
  \begin{subfigure}[b]{0.38\linewidth}
    \includegraphics[width=\linewidth]{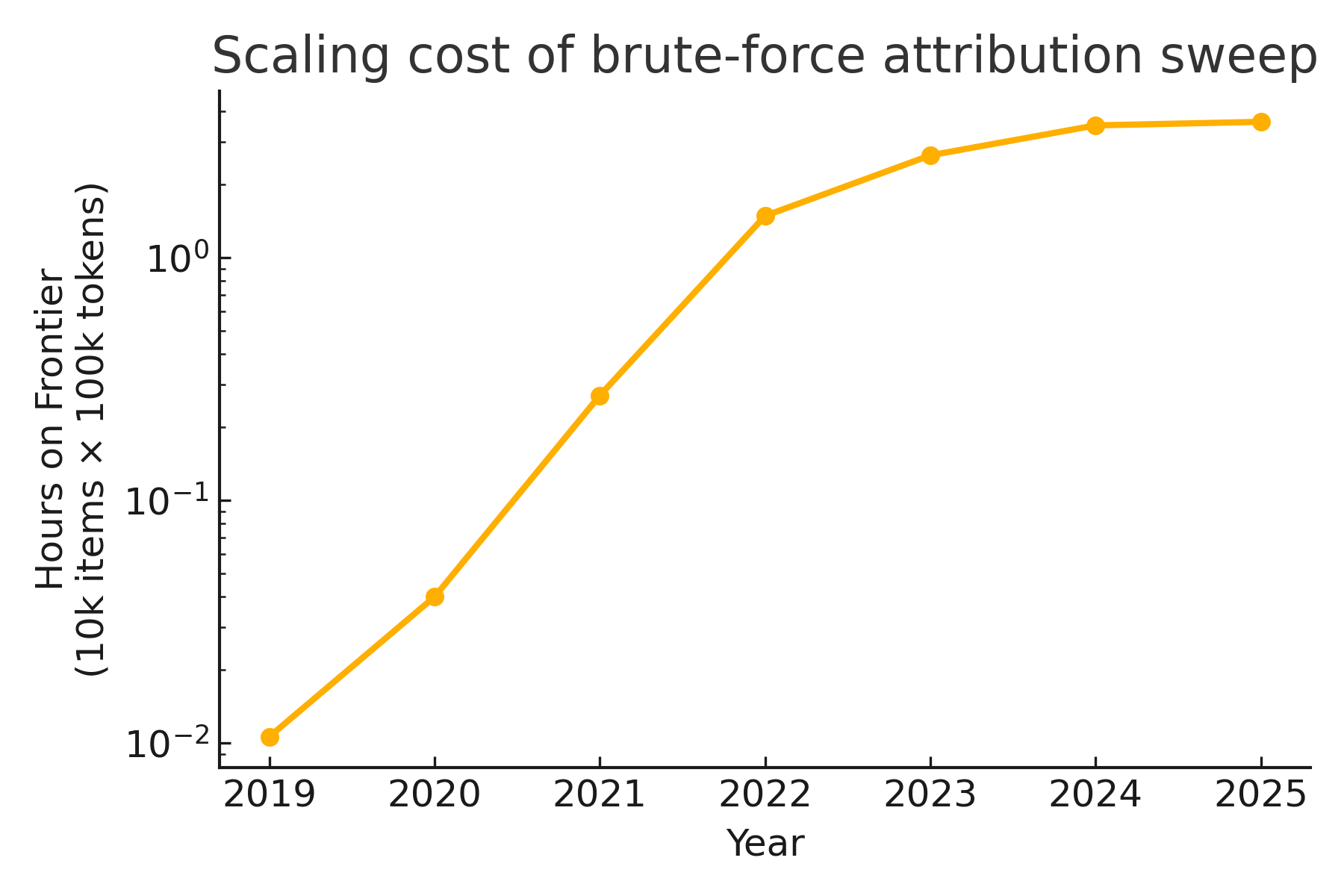}
    \label{fig:sweephours}
  \end{subfigure}

  \caption{Compute wall for exhaustive attribution. 
  (a)  Log$_{10}$ wall-clock seconds on Frontier required to brute-force
  the likelihood of a \emph{single} sequence against \emph{all} known models
  each year (rows: 1 k, 10 k, 100 k tokens).  
  (b)  Wall-clock \emph{hours} on Frontier needed to sweep
  10 000 suspect items of 100 k tokens each day as the model ecosystem
  grows from 2019 to 2025 (log scale).  Together, the panels show how
  single-item attribution creeps from milliseconds to seconds, while a
  modest daily workload balloons from minutes to multi-hour jobs—well
  beyond real-time forensic tolerances.}
  \label{figcompute}
\end{figure}

What would this mean in practical computational terms? Consider the task of attributing a single piece of suspicious content consisting of 100,000 tokens—a sizable but plausible length for a lengthy piece of misinformation or propaganda. Under a naive scenario where we calculate a single floating-point operation (FLOP) per parameter per token to compute likelihoods (which is highly optimistic—real-world inference is several times costlier), we would require $2.2 \times 10^{13}$ parameters multiplied by $10^5$ tokens, totaling about $2.2 \times 10^{18}$ FLOPs for this single attribution.

To contextualize this number, the Frontier supercomputer—among the most powerful systems available as of 2025 \cite{frontier_supercomputer}—achieves peak performance of approximately $1.7 \times 10^{18}$ FLOPs per second. Thus, attributing just one suspicious 100k-token output across all known models in 2025 would theoretically take around $2.2 \times 10^{18} / 1.7 \times 10^{18} \approx 1.3$ seconds on Frontier under optimal conditions (Figure \ref{figcompute}(a)). While this might appear reasonable for isolated incidents, the complexity scales steeply.
Consider a more demanding case in which \emph{10 000} suspicious
outputs—each 100 k tokens long—must be checked every day.  With the
2025 model set (~\(2.2\times10^{13}\) parameters), that workload
requires
\[
10^{4}\;\text{items}\times 10^{5}\;\text{tokens/item}\times
2.2\times10^{13}\;\text{params}
\;=\;
2.2\times10^{22}\ \text{FLOPs/day}.
\]
At Frontier’s peak \(1.7\) exaFLOP s\(^{-1}\) this equals
\(2.2\times10^{22}/1.7\times10^{18}\approx1.3\times10^{4}\,{\rm s}\),
or \(\mathbf{3.6}\) h of wall-time—about \(\mathbf{15\%}\) of the
machine’s entire daily capacity for a \emph{single} attribution task-set
(Fig.~\ref{figcompute}b).

Scaling to nationwide monitoring renders brute force hopeless.  Recent
surveys suggest \(1.32\times10^{8}\) U.S.\ adults use
generative-AI daily.  At a modest 10 000 tokens each, the country
produces
\[
1.32\times10^{8}\times10^{4}=1.32\times10^{12}\ \text{tokens/day},
\]
or \(4.8\times10^{14}\) tokens per year.  Brute-forcing those tokens
against the same 22 T-parameter pool would need
\[
4.8\times10^{14}\times2.2\times10^{13} \approx 1.1\times10^{28}\ \text{FLOPs}.
\]
Even with Frontier,
\[
\frac{1.1\times10^{28}}{1.7\times10^{18}}
\approx 6.5\times10^{9}\,\text{s}\approx\mathbf{200\;years}
\]
of uninterrupted peak compute would be required to attribute a
\emph{single} year’s U.S.\ output—clearly infeasible in real time.

Merely \emph{streaming} that much data%
\,(1.9\,\text{PB at }4\,\text{B/token})%
through Frontier’s \(\sim 2\,\text{TB}\,\text{s}^{-1}\) burst~I/O
would already consume about \(16\,\text{min}\) per day.

\begin{table}[htbp]
\centering
\renewcommand{\arraystretch}{1.15}
\begin{tabular}{p{0.55\textwidth} p{0.35\textwidth}}
\toprule
\textbf{Metric (U.S., annualised 2025)} & \textbf{Estimated Value} \\ \midrule
Daily Active LLM Users & $1.32 \times 10^{8}$ \\
Total Tokens Generated per Year & $4.8 \times 10^{14}$ tokens \\[0.3em]
\textit{--- Pure I/O scan ---} & \\
Data Volume (4 bytes / token) & $\approx 1.9$ PB \\ 
Streaming Time on Frontier\footnotesize{\,(\(\sim\!2\) TB s\(^{-1}\) burst)} & $\sim\!16$ min \\[0.6em]
\textit{--- Brute-force likelihood (22 T parameters) ---} & \\
Total FLOPs ($4.8\!\times\!10^{14}$ tokens $\times$ $2.2\!\times\!10^{13}$ params) & $1.1 \times 10^{28}$ FLOPs \\
Processing Time on Frontier (1.7 exaFLOP s\(^{-1}\)) & $\sim\!6.5 \times 10^{9}$ s $\;\approx\;$200 yr \\ \bottomrule
\end{tabular}
\caption{Compute budget for a \emph{single, exhaustive} annual sweep of
all LLM-generated content in the United States.  Even with the Frontier
supercomputer, simple I/O takes minutes per day, while full likelihood
evaluation would require centuries.}
\label{tab:attribution_time_estimation}
\end{table}

This computational impossibility of exhaustive attribution emphasizes a strategic dilemma: brute-force attribution, already at the brink of infeasibility today, will soon become outright impossible due to relentless increases in model quantity, diversity, and parameter size. 
Of course, real attribution might not require scanning everything; one might triage or focus on certain content. But the point remains that any brute-force approach---even one that assumes access to all model internals and unlimited computing---faces extreme scalability issues. In practice, things are even harder: many models are closed or not publicly runnable, content might be encrypted or privacy-protected, and one would have to find clever shortcuts.

\section{Broader Challenges and Discussion}
Beyond the theoretical impossibilities and the combinatorial and computational explosions described above, several other factors complicate LLM attribution:

\paragraph{Evasion via Social Network Dynamics.} 
In realistic attack scenarios, malicious content generated by LLMs will propagate through social networks and other channels before reaching its victims.  Practically, investigators may have only a single snippet of suspect text to examine, and our impossibility results show that if attribution can fail even with unlimited data in theory, one sample is clearly far from sufficient.  Attackers exploit this by limiting the evidence they reveal—distributing an operation across many small pieces of content or relaying it through social-network chains—so defenders never obtain a rich dataset to analyze.  Attackers can also inject AI-generated disinformation via proxy accounts or compromised nodes, making the origin hard to trace.  The structure of social networks—often characterized by small-world properties and heavy-tailed connectivity \cite{newman2006structure}—allows content to diffuse widely without clear indication of where it started.  Adversaries may deliberately rewire parts of the network or introduce \emph{Sybil} nodes (fake accounts) to obfuscate information flow \cite{waniek2022social,waniek2022trading}.  These strategies echo classical problems in network forensics: tracking the “patient zero” of an epidemic or the first source of a rumor is notoriously difficult without strong assumptions \cite{christakis2009connected}.  In the context of AI, even if perfect model attribution were possible in principle, content hopping through many users risks attributing the text to the last sharer rather than the true originator, defeating accountability.

\paragraph{A Generative Arms Race.} While attribution of content to models appears hard, the \emph{generation} of content is getting easier. Fascinatingly, recent theoretical work by Kleinberg and Mullainathan~\cite{kleinberg2024language} demonstrates what they call “language generation in the limit.” In a framework reminiscent of Gold’s, they show that it is possible for an agent to produce novel strings from a target language indefinitely without actually identifying the language. In plainer terms, one can imagine a situation where defenders deploy an AI to imitate an attacker’s model: the defender’s AI produces outputs that are valid under the attack model’s distribution, effectively matching the attacker’s capability, yet the defender’s AI has no idea which model it is imitating (it just knows how to continue the language). This theoretical result implies an odd stalemate: both the attacker and defender can spew out similar content, neither being able to conclusively prove which model is behind it. In such a scenario, attribution becomes a moot point---everyone can generate in the style of everyone else, and authenticity is lost in a sea of mimicry. This could lead to a game of confusion where any incriminating text could be dismissed as something an autonomous agent could have produced as well. It underscores how the very nature of generative AI erodes the link between model and output.

\paragraph{Toward Mitigation Strategies.} Given the grim outlook, what can be done? A few avenues, each with limitations, are often discussed:
\begin{itemize}
    \item \textbf{Watermarking and Fingerprinting.} If model developers voluntarily (or by regulation) incorporate hidden watermarks into generated content (e.g., detectable bit patterns in text or slight image perturbations), attribution could be greatly aided. Research on watermarking for LLMs is ongoing. However, watermarks can potentially be removed or spoofed by adversaries, especially if the watermark keys or methods become known. Moreover, not all model developers will comply, especially open-source ones or malicious actors.
    \item \textbf{Model Authentication Infrastructure.} One could imagine a world where each model has a cryptographic signature and each piece of AI content comes with a certified origin stamp (like a cryptographic watermark or metadata). This is conceptually similar to code signing. However, this requires adoption across industry and doesn’t solve the problem of someone using an uncompromised model to produce content and then stripping the signature.

    \item \textbf{Reducing the Hypothesis Space.} From a policy angle, one way to make attribution easier is to limit the number of models at large. If, for example, only a handful of foundation models were authorized for public use, the identification problem shrinks to distinguishing among those (the finite deterministic case would be more applicable). This could be implemented via regulation or industry consolidation. However, this conflicts with open innovation and may be impractical to enforce globally.
\end{itemize}

Our results strongly suggest that purely technical, post-hoc attribution will not be a panacea. The rapid scaling of model capabilities and availability means we should assume a world where attribution of malicious AI outputs is difficult or impossible. This, in turn, argues for \emph{proactive} measures: for example, embedding protective measures in models, limiting access to highly capable models, educating the public about AI-generated misinformation, and building resilience to fake content.

\section{Conclusion}

We have explored the landscape of LLM attribution through both the lens of learning-theoretic impossibility and the empirical realities of today’s model ecosystem.  Building on Gold’s and Angluin’s identification-in-the-limit results, we proved that even with unlimited, unlabeled data one cannot in general infer the true source model unless the candidate set is artificially small and mutually exclusive.  Our new theorem shows that two probabilistic language models whose output distributions overlap everywhere are, in principle, indistinguishable.  This negative result is not a finite-sample curiosity; it is an information-theoretic limit that survives in the infinite-data regime.

The empirical picture amplifies this constraint.  Fine-tuning, checkpoint remixing, and automated architecture search are causing the pool of plausible generators to double every few months, while brute-force attribution already exceeds exascale budgets for anything beyond toy corpora.  Exhaustive search is therefore a non-starter, and statistical “best-guess’’ methods inherit the impossibility boundary just established.

These findings collide with a policy environment that increasingly assumes model-level traceability.  The EU AI Act, U.S.\ Executive Order 14110, and parallel draft rules in the U.K.\ and OECD impose obligations—incident reporting, watermark retention, and red-team documentation—squarely at the provider or model level \cite{UKAIWhitePaper2023,OECDAIRecommendation2024,OECDHAIPFramework2025}.  Without reliable attribution, regulators lack the technical substrate to levy fines or mandate recalls, and developers cannot demonstrate the “due diligence’’ now embedded in many safe-harbor clauses.  Chain-of-custody investigations face a similar impasse: modern attack pipelines often route prompts through multiple agents, so isolating the compromised node may be the only practical way to halt a live exploit when the human operator is anonymous.  For instance, in March 2024 security researchers at Cornell Tech disclosed “Morris II,” a self-replicating AI worm that jumped from one generative-AI email assistant to the next by embedding a malicious prompt in each message. Because every assistant (GPT-4, Gemini Pro, LLaVA) automatically forwarded the adversarial prompt downstream, investigators found the only viable containment strategy was to quarantine the first compromised agent—the human operator never had to re-enter the loop.,\cite{Cohen2024MorrisII,Burgess2024Wired}. If the identification of this first compromised agent is hard, that provides further difficulty in these sorts of investigations. 

From the developer’s perspective, proving that a particular model was—or was not—responsible for some output is increasingly central to liability and reputation.  In January 2024, for instance, an AI-generated robocall cloned President Biden’s voice and urged New Hampshire voters to skip the primary; independent analysts at Pindrop and UC Berkeley traced spectral artefacts to the commercial voice-cloning startup ElevenLabs, but the company itself said it “cannot comment on specific incidents,” hinting at its inability to decisively confirm or deny authorship in real time \cite{WiredBidenRobocall2024,PindropRobocall2024,WPBidenProbe2024}  Our impossibility results therefore imply that beyond a certain point, developers cannot rely on post-hoc attribution to catch misuses—strengthening the case for preventative measures such as stricter release controls or built-in watermarking for powerful models.

Because the impossibility bound is information-theoretic, it suggests a pivot from post-hoc identification to ex-ante safeguards.  Technical mitigations such as cryptographic signatures, robust watermarking, or permissioned key escrow inject additional bits that collapse the indistinguishability class.  Auditable usage logs and risk-tiered release practices further narrow the hypothesis space before an incident occurs.  The path forward, in other words, lies in engineering traceability rather than trying to infer it after the fact. Absent such measures, the true author of a text, whether human or AI (and if AI, which one), will frequently remain a mystery, complicating trust, enforcement, and democratic oversight in our information ecosystem. Just as early theoretical studies of cybersecurity—on conflict timing and the strategic dilemmas of digital attribution—proved prescient for later real-world incidents \cite{axelrod2014timing,edwards2017strategic}, we aim to anticipate LLM-attribution challenges before they become unmanageable.  A proactive grasp of these limits can guide the design of safer AI systems today.

\section*{Code availability.} 
All analyses and figures in this work can be fully with a
single Python notebook provided in 
(\href{https://colab.research.google.com/drive/1usA2Z72DnJfGAHb0zg4IlsrtfIS0oTX1?usp=sharing}{\texttt{Google~Colab}}),
which loads the \emph{Ecosystem Graphs} CSV and re-generates every plot
and FLOP calculation end-to-end.

\section*{Acknowledgements}

M.C. is funded by Horizon Europe Chips JU (HORIZON-JU-Chips-2024-2-RIA, NexTArc CAR), by grant PID2023-150271NB-C21 funded by MICIU/AEI/ 10.13039/501100011033 (Spanish Ministry of Science, Innovation and University, Spanish State Research Agency). 
ChatGPT assisted with editing, formatting, and code; all ideas and results are the authors’.

\end{document}